\documentclass[accepted]{uai2026} 

\usepackage[american]{babel}

\usepackage{amsmath,amsthm,amssymb}

\usepackage{multirow}

\usepackage{natbib} 

\usepackage{graphics,caption,subcaption}

\usepackage{algorithm,comment,algorithmicx,algpseudocode}
\algrenewcommand\algorithmicrequire{\textbf{Input:}}
\algrenewcommand\algorithmicensure{\textbf{Output:}}
\newcommand{\Input}{\Require}

\usepackage{mathtools,comment,setspace,color}
\theoremstyle{plain}
\newtheorem{theorem}{Theorem}[section]
\newtheorem{proposition}[theorem]{Proposition}
\newtheorem{lemma}[theorem]{Lemma}

\newtheorem{corollary}[theorem]{Corollary}
\newtheorem{remark}[theorem]{Remark}

\theoremstyle{plain} 
\newtheorem*{remark*}{Remark}
\newtheorem*{assumption*}{Assumption}
\newtheorem*{definition*}{Definition}
\newtheorem*{theorem*}{Theorem}
\newtheorem*{lemma*}{Lemma}
\newtheorem*{corollary*}{Corollary}

\usepackage{pifont}

\newcommand{\E}{\mathbb{E}}

\newcommand{\X}{\mathcal{X}}
\newcommand{\Y}{\mathcal{Y}}

\newcommand{\I}{\mathbb{I}}

\newcommand{\V}{\mathcal{V}}

\newcommand{\la}{\lambda}
\newcommand{\up}{\upsilon}
\newcommand{\al}{\alpha}
\newcommand{\be}{\beta}
\newcommand{\ga}{\gamma}

\usepackage{booktabs}

\usepackage{hyperref}

\usepackage[toc,page,header]{appendix}
\usepackage{minitoc}

\usepackage{tikz} 

\title{Near-Exponential Convergence Rates 
for kNN Classification\\ based on Boltzmann 
Margin 
}

\author[1]{Luyuan~Yang}
\author[1]{Shayan~Shafaei}
\author[1]{\href{mailto:<clan@ou.edu>?Subject=Your UAI 2026 paper}{Chao~Lan}}
\affil[1]{%
    School of Computer Science\\ 
    University of Oklahoma\\
    Norman, Oklahoma, USA
}

\begin{document}
\maketitle

\begingroup
\renewcommand\thefootnote{}
\footnotetext{This work has been 
accepted at UAI 2026.}
\addtocounter{footnote}{-1}
\endgroup

\begin{abstract}
Convergence-rate analysis for classifiers is 
often conducted under either Tsybakov margin 
or Massart margin. 
The former is a relatively weak condition 
that typically yields polynomial rates, 
while the latter is substantially stronger 
but can guarantee exponential rates. 
In this paper, we introduce 
a new condition, called \textit{Boltzmann margin},  
that bridges the gap between these two 
regimes. It is weaker than Massart 
margin, generally stronger than Tsybakov 
margin, and can imply many of their 
properties under suitable conditions. 
We apply Boltzmann margin to the analysis 
of kNN classifiers and establish the first \textit{near-exponential} convergence rates 
for kNN classification. 
We also present extensions of the main results 
and provide numerical evidence supporting 
the main theoretical implications.
\end{abstract}

\addtocontents{toc}{\protect\setcounter{tocdepth}{0}} 

\section{Introduction}

In machine learning, a fundamental question 
is how fast the classification error converges 
to the Bayes error as training data 
size increases. The convergence rate has 
been widely studied under two margin 
conditions: \textit{Tsybakov margin} 
\citep{tsybakov2004optimal} and  
\textit{Massart margin} \citep{massart2006risk}. 
Tsybakov margin is a relatively weak condition 
that assumes data decay polynomially 
fast towards the Bayes decision boundary 
and typically yields polynomial rates \citep{chaudhuri2014rates,doring2018rate}.  
Massart margin is substantially stronger,  
as it assumes no data exist near the boundary, 
but can guarantee exponential rates 
\citep{cabannnes2023case,vigogna2022multiclass}. 

There remains a substantial gap between 
the two margins. To establish any super-polynomial 
convergence rate, existing analyses typically 
require Massart margin, which may be 
unnecessarily strong for many problems. 
{\it Can there be a more balanced margin 
condition that yields super-polynomial 
convergence rates without assuming an 
absence of data near the boundary?} 
This question motivates our study. 

In this paper, we introduce \textit{Boltzmann margin}, 
a more balanced condition that assumes 
data  decay exponentially fast towards the 
Bayes decision boundary. It is weaker than Massart 
margin, generally stronger than Tsybakov margin, 
and can imply many of their properties under 
suitable conditions. 
Figure \ref{fig:margin} illustrates 
the relations among the three margins. 
In general, the decay behavior under 
Boltzmann margin lies between those under Massart 
and Tsybakov margins. Moreover, through suitable  
choices of its parameters, Boltzmann margin 
can resemble the decay behavior of 
either margin in certain regions, making 
it a flexible framework for convergence 
rate analysis. 

\begin{figure}[t!]
    \centering
    \includegraphics[width=0.9\linewidth]{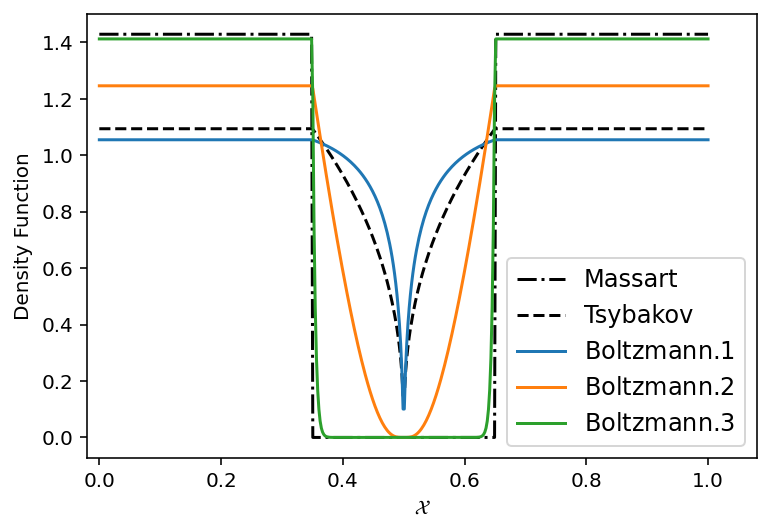}
    \vspace{-5pt}
    \caption{Example data densities 
    on $[0, 1]$ that satisfy different 
    margins respectively. 
    Bayes decision boundary is 0.5.}
    \label{fig:margin}
\end{figure}

\begin{table*}[t!]
\centering
\def\arraystretch{1}
\begin{tabular}{c|l|c|c|c|c}\toprule
\bf Model & 
\bf Work & \bf Posterior & \bf Margin 
& \bf Density 
& \bf Error Order \\ \midrule  
\multirow{10}{*}{kNN} 
& \citep{kulkarni2002rates} & 
H\"older & \text{\sffamily x} 
& \text{\sffamily x} & $n^{-\frac{2\up}{D}}$\\ 
& \citep{gyorfi2006rate} & H\"older* 
& \text{\sffamily x} & \text{\sffamily x} 
& $n^{-\frac{\up}{2\up+D}}$ \\ 
& \citep{kohler2007rate} 
& H\"older & Tsybakov* & strong 
& $n^{-\frac{\up(1+\be)}{2\up+D}}$ \\ 
& \citep{samworth2012supplement} &  
H\"older & Tsybakov & 
min.mass & $n^{-\frac{2(1+\be)}{4+D}}$ \\
& \citep{chaudhuri2014rates} & H\"older* 
& Tsybakov & \text{\sffamily x}
& $n^{-\frac{\up(1+\be)}{2\up+D}}$ \\
& \citep{chaudhuri2014rates} & H\"older* 
& Massart & \text{\sffamily x}
& $e^{- c' n}$ \\
& \citep{gadat2016as} & Lipschitz  
& Tsybakov & min.mass
& $n^{-\frac{1+\be}{2+D}}$ \\
& \citep{doring2018rate} &  
Lipschitz* & Tsybakov & 
\text{\sffamily x} & $n^{-\frac{1+\be}{2+D}}$ \\
& \citep{gyorfi2021universal} &  
H\"older & Tsybakov & 
\text{\sffamily x} 
& $n^{-\frac{\up(1+\be)}{2\up+D}}$ \\
& \citep{yang2025fast} 
&  H\"older & Tsybakov & 
\text{\sffamily x} 
& $n^{-\frac{3}{4}}$ \\
& Ours & H\"older & Boltzmann & 
\text{\sffamily x} 
& $e^{- c n^{\frac{\be}{\be+C_\up}}}$ \\[.1em] 
& Ours & H\"older & Massart & 
\text{\sffamily x} & $e^{- c n}$ \\\midrule
\multirow{5}{*}{ekNN} & 
\citep{biau2010rate} & Lipschitz & \text{\sffamily x} 
& \text{\sffamily x} & $n^{-\frac{2}{2+D}}$ \\ 
& \citep{samworth2012optimal} &  
continuous & Tsybakov & min.mass 
& $n^{-\frac{4}{4+D}}$ \\
& \citep{xue2018achieving} &  H\"older & Tsybakov & 
min.mass & $n^{-\frac{\up(1+\be)}{2\up+d}}$ \\
& \citep{qiao2019rates} &  H\"older* & Tsybakov & 
\text{\sffamily x} & 
$n^{-\frac{\up(1+\be)}{2\up+D}}$ \\
& Ours & H\"older & Boltzmann & 
\text{\sffamily x} 
& $e^{- c n^{\frac{g \be}{\be+C_\up}}}$ \\ \bottomrule
\end{tabular}
\caption{Convergence Rates for kNN Classifiers. 
Notations are loosely defined to cover both 
basic and generalized (marked by `*') 
definitions. $n$ is training data size;   
$\be$ is the main margin parameter;  
$\up$ is the main H\"older parameter 
(for H\"older smooth posterior 
function $\eta$); 
$D$/$d$ is input/intrinsic data 
dimension; $C_\up = 2 + d/\up$   
and $c',c,g$ are positive constants. }
\label{tab:summary_rate}
\end{table*}

\vspace{-5pt}

We apply Boltzmann margin to analyze 
k-nearest-neighbor (kNN) classifiers. 
kNN is popular due to its flexibility 
and interpretability  
\citep{chen2018explaining}, and 
is increasingly used to enhance 
the explainability of complex models 
\citep{khandelwalgeneralization,rajani2020explaining,dai2021towards,li2024nearest}. 
These developments motivate a deeper 
understanding of the fundamental properties of kNN. 

Convergence rates for kNN and ensemble 
kNN (ekNN) classifiers have long been 
studied. Table \ref{tab:summary_rate} 
summarizes the known results. 
As shown in the table, almost all studies  
assume a Tsybakov margin condition 
and establish polynomial rates. 
It is also shown in 
\citep[Appendix C.4]{chaudhuri2014rates}
that Massart margin can imply
an exponential convergence rate 
for kNN. 
In this paper, we apply Boltzmann margin 
to the analysis of kNN classifiers and establish 
their first \textit{near-exponential} 
convergence rates. 

In the remainder of the paper, we review related 
work in Section \ref{sec:related}, introduce 
and analyze Boltzmann margin in Section \ref{sec:preliminary}, present new results for 
kNN and ekNN in Sections \ref{sec:knn} and \ref{sec:eknn}, and provide numerical 
results in Section \ref{sec:experiment}.

\section{Related Work}
\label{sec:related}

To facilitate comparison of existing results, we adopt the notation used in Table \ref{tab:summary_rate} in this section. Formal notation for the present study is introduced in Section \ref{sec:preliminary}.

\subsection{kNN Classification}
\label{sec:related_knn}

Early studies on kNN convergence rates 
are based on relatively mild assumptions, 
notably H\"older smoothness of the posterior 
function $\eta$. 
\citep[Corollary 1]{kulkarni2002rates} 
presents a $O(n^{-\frac{2\up}{D}})$ convergence 
rate of 1NN error (to twice the Bayes 
error) when both $\eta$ and the 
conditional label variance are H\"older.
\citep{gyorfi2006rate} presents an 
asymptotic rate of $O(n^{-\frac{\up}{2\up+D}})$ 
for kNN when $\eta$ is generalized 
H\"older (which, informally, requires 
the average posterior within any open 
ball to approach the posterior at the 
center of the ball as the ball radius 
shrinks, with the convergence rate 
controlled by a H\"older-type parameter $\up$) 
and $k = n^{\frac{2\up}{2\up+D}}$. 

Later studies establish faster rates for kNN 
under Tsybakov margin and additional 
density assumptions. 
\citep[Theorem 6]{kohler2007rate} presents 
a $\tilde{O}(n^{-\frac{\up(1+\be)}{2\up+D}})$ 
rate when $\eta$ is H\"older, 
the data domain is bounded, the data density 
is bounded away from zero (i.e., strong 
density), and 
$k = \tilde{O}(n^{\frac{2\up}{2\up + D}})$. 
\citep[Theorem 1]{samworth2012supplement} 
presents a $O(n^{-\frac{2(1+\be)}{4+D}})$ rate 
when $\eta$ is H\"older, the data density 
has minimal mass (which, informally, requires 
any open ball centered at any point 
to have a mass that scales with 
its radius), 
and $k = O(n^{\frac{4}{4+D}})$. 
\citep[Theorem 3.1]{gadat2016as} presents 
a $O(n^{-\frac{1+\be}{2+D}})$ 
rate when $\eta$ is Lipschitz continuous, 
the data density has minimal mass, 
and $k = O(n^{\frac{2}{2+D}})$. 

Recent studies remain rooted in the Tsybakov 
margin and focus on relaxing conditions. 
\citep[Theorem 4]{chaudhuri2014rates} presents 
a $O(n^{-\frac{\up(1+\be)}{2 \up + D}})$ 
rate when $\eta$ is generalized 
H\"older and $k = O(n^{\frac{2\up}{2\up + D}})$. 
\citep[Theorem 1]{doring2018rate} presents 
a $O(n^{-\frac{1+\be}{2+D}})$ rate 
when $\eta$ is modified Lipschitz (which, 
informally, requires the posterior gap 
between two points to decrease as the 
radius of an open ball that contains them 
decreases) and $k = O(n^{\frac{2}{2+D}})$.  
\citep[Theorem 6]{gyorfi2021universal} presents 
a $O(n^{-\frac{\up(1+\be)}{2\up+D}})$ rate for 
multi-class kNN in a metric space when 
$\eta$ is (weighted) H\"older, the 
distribution function is continuous, 
and $k = O(n^{\frac{2\up}{2\up+D}})$. 
Recently, \citep[Theorem 4]{yang2025fast} 
shows that the convergence rate can 
vary across different phases   
and reach $O(D n^{-3/4})$ when $n \leq D^2$, 
provided that $\eta$ is H\"older, 
$D > 10$, and $k = O(n^{3/4})$. 
An exception appears in the 
appendix of \citep{chaudhuri2014rates}, 
where the established polynomial rate 
is extended to an asymptotic exponential 
rate under Massart margin. 

Convergence rates have also been 
studied for several kNN variants. 
Under Tsybakov margin, 
\citep[Theorem 4.3]{reeve2019fast} presents 
an asymptotic rate of $\tilde{O}(
n^{-\frac{\up(\be+1)}{2\up + 1}})$
for robustified kNN when $\eta$ is 
modified Lipschitz and $k = \tilde{O}( 
n^{\frac{2\up}{2\up + 1}})$;  
\citep[Theorem 5]{zhang2023improved} 
presents a $O(n^{-\frac{1+\be}{1+d}})$ 
rate for compressed kNN when $\eta$ 
is modified Lipschitz, the compressed 
data dimension is sufficiently large, 
and $k = O(n^{\frac{2}{2+d}})$. Without 
a margin assumption, 
\citep[Theorem 4.8]{rimanic2020convergence} 
presents an asymptotic rate of 
$O(n^{-\frac{1}{2+d}})$ 
for kNN trained in a 
bounded transformed domain  
when $\eta$ is probabilistic 
Lipschitz and $k = O(n^{\frac{2}{2+D}})$.

\subsection{Ensemble kNN Classification}
\label{sec:related_bageknn}

Ensemble learning is a popular strategy 
for improving classification performance 
by combining weak classifiers, typically learned 
from bootstrap samples, into a strong classifier 
\citep{dietterich2000ensemble,zhou2025ensemble}. 
Let $\tilde{n}$ be the bootstrap sample size 
and $m$ be the number of weak classifiers. 

Early studies on ensemble kNN assume $m = \infty$. 
\citep[Corollary 10]{biau2010rate} shows 
a bagged 1NN regression estimator has a 
$O(n^{-\frac{2}{2+D}})$ error rate  when 
$\eta$ is Lipschitz, the label variance is 
bounded, $D \geq 3$, $\tilde{n} \propto 
n^{\frac{D}{2+D}}$, and $k = O(n^{\frac{2}{2+D}})$. 
(The same rate applies to the induced bagged 
1NN classifier.)
\citep[Corollary 4]{samworth2012optimal} shows 
that under an implied Tsybakov margin, a  
bagged 1NN classifier has a $O(n^{-\frac{4}{4+D}})$ 
rate when $\eta$ is continuous, the data density 
has minimal mass, and $n^r \leq \tilde{n} 
\leq n^{1-r}$ for any $r \in (0, 1/2)$. 

Recent studies remain rooted in Tsybakov margin 
but do not require $m = \infty$.  
\citep{xue2018achieving} considers a special 
bagged 1NN classifier, where each bootstrap 
sample is relabeled using the training set 
and the kNN classification 
rule with $k = \tilde{O}(n^{\frac{2\up}{2\up + d}})$. 
It shows that the classifier has an asymptotic 
rate of $\tilde{O}(n^{-\frac{\up}{2\up+d}}) 
+ \tilde{O}(\tilde{n}^{-\frac{\up}{d}})$ 
when $\eta$ is H\"older and the data density 
has minimal mass, and that the first term 
dominates when 
$\tilde{n}/n = \Omega(n^{-\frac{\up}{2\up+d}})$. 
\citep{qiao2019rates} presents a 
$O(n^{-\frac{\up(1+\be)}{2\up+D}})$ rate
for bagged kNN when $\eta$ is modified 
H\"older and $k = O(\tilde{n}^{\frac{2\up}{2\up+D}} 
m^{-\frac{D}{2\up + D}}) \to \infty$ as 
$n \to \infty$.
In addition to standard ensemble kNN, 
\citep[Theorem 3]{hang2022under} 
designs an under-bagging kNN classifier 
for imbalanced classes; it shows that 
the classifier has an asymptotic rate of 
$\tilde{O}(n^{-\frac{\up(\be+1)}{2\up + D}})$ 
when $\eta$ is H\"older, 
$k = \tilde{O}(m \tilde{n}^{-\frac{D}{2\up+D}})$,  
and $m$ scales at a rate of 
$\tilde{O}(n^{\frac{2\up}{2\up + D}})$.

\subsection{Consistency for ekNN Classifier}

Bayes consistency is a desirable property 
of classifiers. Informally, a classifier is 
\textit{weakly consistent} if its expected 
error converges to the Bayes error, 
and \textit{strongly consistent} if 
its error converges to the Bayes error 
with probability one (over the random choice 
of training data). Strong consistency 
implies weak consistency 
\citep{devroye2013probabilistic}. 

Weak consistency is often easier to establish. 
In fact, most studies reviewed in Section \ref{sec:related_bageknn} already imply weak consistency, since their convergence rates 
imply that the expected classification error 
converges to the Bayes error. 
A few other studies also establish 
weak consistency for ekNN classifiers. 
\citep{hall2005properties} studies 
a bag of infinitely many 1NN classifiers, 
each trained on $\tilde{n}_+$ positive 
points sampled from $n_+$ candidates 
and $\tilde{n}_-$ negative points sampled 
from $n_-$ candidates, and shows that the 
classifier is weakly 
consistent if $\tilde{n}_+, \tilde{n}_-$ 
diverge and $\frac{\tilde{n}_+}{n_+}, 
\frac{\tilde{n}_-}{n_-}, \frac{\tilde{n}_+}{n_+}-\frac{\tilde{n}_-}{n_-}$ all converge 
to zero. \citep{biau2010layered} 
shows that a bag of $m$ 1NN classifiers 
is weakly consistent if $m, \tilde{n}$ 
diverge and $\frac{\tilde{n}}{n} \to 0$. 
\citep{JMLR:v9:biau08a} further shows that 
a probabilistic version of this 
classifier, where $\tilde{n}$ depends 
on a sampling probability $q_n$ for 
each candidate point, is weakly consistent 
if $\E \tilde{n} = n q_n$ diverges 
and $q_n \to 0$.

Strong consistency often requires stronger 
conditions. kNN is shown to be strongly 
consistent if $k \to \infty$ and $k/n \to 0$ 
\citep[Chapter 11]{devroye2013probabilistic}. 
1NN is not strongly consistent 
\citep{cover1967nearest}, but its  
variants based on compressed 
training data are strongly consistent 
\citep{kontorovich2015bayes,hanneke2020ita}. 
To the best of our knowledge, strong consistency 
for ekNN has not been established. 
This paper presents the first such 
guarantee.

\section{Preliminaries}
\label{sec:preliminary}

In this section, we introduce 
notation and concepts. 
When clarity is needed, a probability 
(or expectation) is denoted by $\Pr_Z$ 
(or $\E_Z$) when taken with respect 
to the randomness of $Z$. 
As in prior studies, we focus on 
binary classification. 

Let $\X$ be a data space equipped with 
norm $||\cdot||$ and $\Y = \{0, 1\}$ be a 
label set. Let $B(x, r) = \{x' \in \X; ||x - x'|| 
\leq r\}$ denote a closed ball centered 
at $x$ with radius $r$. For $\X$, 
let $D$ be its input dimension   
and $d$ be its \textit{intrinsic dimension}  
introduced in \citep{xue2018achieving},  
i.e., an integer for which there exists 
a constant $C_d > 0$ such that $\Pr_{z}\{z 
\in B(x, r)\} \geq \min(C_d r^d, 1)$  
for all $x \in \X$ and $r > 0$. 
While most known rates depend on $D$, 
our rates depend on $d$.
 
Let $X \in \X$  be a random data point 
and $Y \in \Y$ be its label. The posterior 
function $\eta$ is defined by
\begin{equation*}
\eta(x) = \E_{Y} [Y \mid X = x].   
\end{equation*}
The following assumption is made 
throughout the analysis. 
\begin{assumption*}[H\"older Smooth] 
The posterior function $\eta$ 
is $(\la, \up)$-H\"older smooth 
for constants $\la, \up >0$, 
i.e., for any $x, x' \in \X$, 
we have $|\eta(x) - \eta(x')| 
\leq \la \cdot ||x - x'||^{\up}$.
\end{assumption*}
Let $\hat{y}$ be a classifier 
and $er(\hat{y}) = {\Pr}_{X,Y} 
\{\hat{y}(X) \neq Y\}$ be 
its error. Let $\hat{y}_*$ be a 
\textit{Bayes classifier}, i.e., 
for all $x \in \X$: 
\begin{equation*}
\hat{y}_*(x) = \I \{\eta(x) \geq 1/2\},  
\end{equation*} 
where $\I$ is the indicator function. 

For a classifier $\hat{y}$ trained on 
a random sample $Z \subseteq \X \times \Y$, 
its \textit{expected excess error} is defined as 
\begin{equation}
\label{eq:deferror}
||\hat{y} - \hat{y}_*||_e := 
\E_Z [er(\hat{y})] - er(\hat{y}_*).   
\end{equation}
Unless otherwise specified, all convergence 
rates discussed in this paper are w.r.t. 
the expected excess error in (\ref{eq:deferror}).

\subsection{Boltzmann Margin}

We first revisit two popular margin conditions. 
Let $\al$, $\be$, $\ga$, $t_*$ be positive constants.  
A distribution on $\X \times \Y$ is said to 
have a $t_*$ Massart margin 
\citep{massart2006risk} if
\begin{equation*}
{\Pr}_X \{ |1/2 - \eta(X)| \leq t_*\} = 0,     
\end{equation*}
and a $(\al,\be)$ \textit{Tsybakov margin} 
\citep{tsybakov2004optimal} if 
\begin{equation*}
\forall t>0,\ \ {\Pr}_X \{ |1/2 
- \eta(X)| \leq t \} \leq \al t^\be.  
\end{equation*}
It is clear that Massart margin is 
much stronger than Tsybakov margin. 
In the following, we introduce a new margin 
condition that interpolates between them. 
It is named after the Boltzmann factor 
in physics \citep{lei2020improved} 
due to the similarity of their forms. 

\begin{definition*}[Boltzmann Margin] 
A distribution on $\X \times \Y$ is said 
to have a $(\al,\be, \ga)$ Boltzmann 
margin if 
\begin{equation}
\label{eq:boltzmann}
\forall t>0,\ {\Pr}_X \{ |1/2 - \eta(X)| \leq 
t \} \leq \al \exp(-\ga t^{-\be}). 
\end{equation}
\end{definition*}

Figure \ref{fig:margin} illustrates 
the relations among the margins. 
The three Boltzmann curves are generated 
with $\be = 0.1, 0.4, 0.8$, respectively, 
while fixing $\al, \ga$. As $\be$ increases, 
the Boltzmann curve approaches the 
Massart curve. As $\be$ decreases, 
it approaches the Tsybakov curve and 
even becomes weaker in certain regions.  
These relations illustrate the flexibility 
of the Boltzmann margin in convergence analysis. 

We now formalize the relations among 
the margins. The first lemma shows 
that Massart margin implies Boltzmann 
margin, and the reverse holds under 
suitable parameter choices. 

\begin{lemma}
\label{lem:relation_massart}
A $t_*$ Massart margin implies a 
$(\al, \be, \ga)$ Boltzmann 
margin for $\al, \be, \ga > 0$ 
satisfying $\al \exp(-\frac{\ga}{t_*^\be}) = 1$. 
Reversely, for any $t_* > 0$, 
a $(1, \be, t_*^{\be})$ Boltzmann margin 
approaches the $t_*$ Massart margin 
condition as $\be \to \infty$, provided 
that $|1/2 - \eta(X)| \neq t_*$ 
almost everywhere.
\end{lemma}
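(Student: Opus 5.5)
For the forward direction we fix $\al,\be,\ga>0$ with $\al\exp(-\ga t_*^{-\be})=1$ and compare $F(t):={\Pr}_X\{|1/2-\eta(X)|\le t\}$ with $G(t):=\al\exp(-\ga t^{-\be})$ on two ranges of $t$.

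\emph{Case $t\le t_*$.} The event $\{|1/2-\eta(X)|\le t\}$ is contained in $\{|1/2-\eta(X)|\le t_*\}$, which has probability zero by the Massart margin. So $F(t)=0\le G(t)$.

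\emph{Case $t>t_*$.} Here we only need the trivial bound $F(t)\le 1$. The map $t\mapsto \exp(-\ga t^{-\be})$ is strictly increasing on $(0,\infty)$, since $t^{-\be}$ decreases. Hence $G(t)\ge G(t_*)=1\ge F(t)$.

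These two cases give (\ref{eq:boltzmann}) for every $t>0$. The only role of the normalization $\al\exp(-\ga/t_*^\be)=1$ is to make $G$ reach $1$ exactly at $t_*$.

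For the reverse direction we take $\al=1$ and $\ga=t_*^\be$, so the bound reads $F(t)\le \exp(-(t_*/t)^\be)$. We read ``approaches the Massart condition'' as follows: the family of admissible $F$, namely those with $F(t)\le G_\be(t)$ for all $t$, shrinks as $\be\to\infty$ to the set of $F$ that vanish on $[0,t_*)$. We then study the pointwise limit of $G_\be$. For $t<t_*$ we have $t_*/t>1$, so $(t_*/t)^\be\to\infty$ and $G_\be(t)\to 0$. For $t>t_*$ we have $(t_*/t)^\be\to 0$, so $G_\be(t)\to 1$, which is a vacuous constraint. At $t=t_*$, $G_\be(t_*)=e^{-1}$ for every $\be$.

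So in the limit the Boltzmann constraints force $F(t)=0$ for every $t<t_*$. Letting $t\uparrow t_*$ and using continuity of measure from below gives ${\Pr}_X\{|1/2-\eta(X)|<t_*\}=0$.

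The main obstacle is the boundary point $t=t_*$: the limiting constraint there is only $F(t_*)\le e^{-1}$, not $0$, so the closed inequality in the Massart condition cannot be obtained from the limit alone. This is exactly where the hypothesis that $|1/2-\eta(X)|\ne t_*$ almost everywhere enters. It gives ${\Pr}_X\{|1/2-\eta(X)|=t_*\}=0$, and hence
\[
{\Pr}_X\{|1/2-\eta(X)|\le t_*\}={\Pr}_X\{|1/2-\eta(X)|<t_*\}=0,
\]
which is the $t_*$ Massart margin.
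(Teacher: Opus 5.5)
Your proposal is correct and follows essentially the same route as the paper. In the forward direction, you split at $t_*$: the Massart null set handles the range below $t_*$, and above it you use the monotonicity of $t\mapsto \al\exp(-\ga t^{-\be})$ together with the normalization $\al\exp(-\ga t_*^{-\be})=1$. In the reverse direction, you take the pointwise limit of $\exp(-(t_*/t)^\be)$ in the three cases $t<t_*$, $t>t_*$, $t=t_*$, as the paper does. Your boundary argument (continuity from below to get ${\Pr}_X\{|1/2-\eta(X)|<t_*\}=0$, then adding the null set $\{|1/2-\eta(X)|=t_*\}$) is just a more explicit version of the paper's remark that the case $t=t_*$ occurs with probability zero.
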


The next lemma shows Tsybakov margin 
implies Boltzmann margin and the reverse 
holds far from the decision boundary. 

\begin{lemma}
\label{lem:relation_tsybakov}
A $(\al, \be, \ga)$ Boltzmann margin 
implies a $(\frac{\ga}{\al}, \be)$ Tsybakov margin. 
Reversely, a $(\al, \be)$ Tsybakov margin 
satisfies the $(\al', \be',\ga')$ 
Boltzmann margin property (\ref{eq:boltzmann}) 
for $t \in [(\frac{\ga'}{\ln (\al'/\al)})^{1/\be'}, 
0.5]$, if $\al', \be' > 0$ and $\ga' \leq \ln (\al'/\al)$. 
\end{lemma}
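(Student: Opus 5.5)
The proof has two independent parts, and in both I will compare the two tail bounds directly at each fixed $t$, using nothing beyond elementary inequalities.

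\emph{Boltzmann $\Rightarrow$ Tsybakov.} Fix $t>0$ and set $x=\ga t^{-\be}>0$. Since $e^{x}\geq x$ for all $x>0$, we have $e^{-x}\leq 1/x$. Plugging this into (\ref{eq:boltzmann}) gives
\begin{equation*}
{\Pr}_X\{|1/2-\eta(X)|\leq t\}\leq \al\exp(-\ga t^{-\be})\leq \frac{\al}{\ga}\, t^{\be}.
\end{equation*}
This is a Tsybakov margin with exponent $\be$. The multiplicative constant this argument produces is $\al/\ga$. I will check carefully whether the stated $\frac{\ga}{\al}$ is a typo for this, or reflects some other parametrization convention. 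As written with $\frac{\ga}{\al}$, the claim does not follow from this inequality.

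\emph{Tsybakov $\Rightarrow$ Boltzmann on the stated range.} Here the obstacle is that the polynomial bound $\al t^{\be}$ cannot be dominated by $\al'\exp(-\ga' t^{-\be'})$ as $t\to0$. So the comparison only works away from the boundary, which is why the range of $t$ is restricted. I will avoid comparing the two functions' shapes and instead use the crude bound $t^{\be}\leq 1$ for $t\leq 0.5$ and $\be>0$. This gives ${\Pr}_X\{|1/2-\eta(X)|\leq t\}\leq \al$, so it suffices to show $\al\leq \al'\exp(-\ga' t^{-\be'})$.

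Taking logarithms, this condition is equivalent to $\ga' t^{-\be'}\leq \ln(\al'/\al)$. Because $\ga'>0$ and $\ga'\leq\ln(\al'/\al)$, we have $\ln(\al'/\al)>0$, so the condition can be rearranged as $t^{\be'}\geq \ga'/\ln(\al'/\al)$. Since $\be'>0$, this is the same as $t\geq(\ga'/\ln(\al'/\al))^{1/\be'}$, which is exactly the left endpoint of the stated interval.

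The hypothesis $\ga'\leq\ln(\al'/\al)$ makes this left endpoint at most $1$. If the endpoint exceeds $0.5$, the interval is empty and the claim holds vacuously, so no case is lost. I will remark that a sharper range could be obtained by keeping the factor $t^{\be}$, but the crude bound already suffices for the statement.
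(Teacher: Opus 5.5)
Your proof is correct and is the paper's argument: $e^{-x}\le 1/x$ gives the Tsybakov constant $\al/\ga$, and the converse chains $\al t^{\be}\le \al\le \al'\exp(-\ga' t^{-\be'})$ using $t^{\be}\le 1$ for $t\le 0.5$ and the lower bound on $t$. You are also right that the $\frac{\ga}{\al}$ in the main-text statement is a typo; the appendix restatement and proof give $\frac{\al}{\ga}$, and the $\frac{\ga}{\al}$ version can fail in general, for example when $\al$ is large and $\ga$ is small.
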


Figure \ref{fig:margin_blzvstsy} 
illustrates the reversed relation. 
The decay is much slower under Boltzmann 
margin except near the decision boundary. 
Although this may seem less significant 
from the margin perspective, where the 
focus is often on near-boundary properties, 
it is highly relevant to machine learning. 
The reason is that far-boundary properties 
often govern non-asymptotic model behavior, 
which is particularly important in practice 
because training data are finite.

\begin{figure}[t!]
    \centering
    \includegraphics[width=0.85\linewidth]{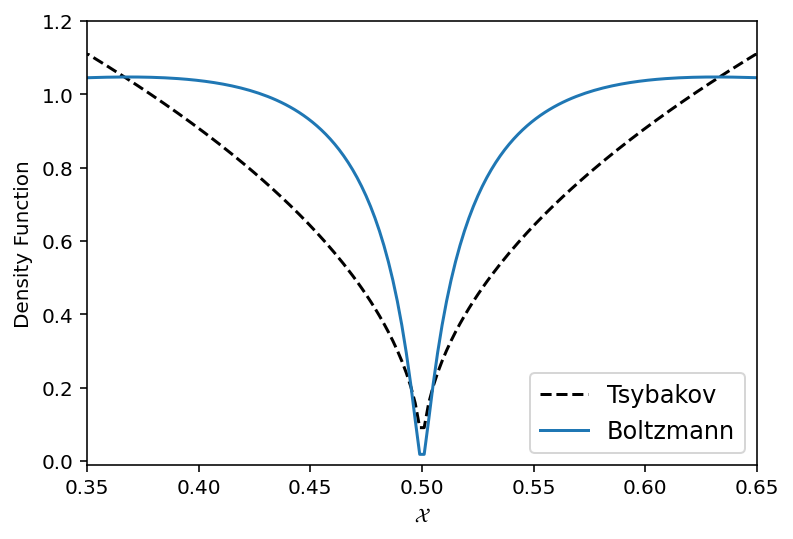}
    \vspace{-5pt}
    \caption{Example Data Densities}
    \label{fig:margin_blzvstsy}
\end{figure}

To better understand the above implication, 
we first establish a monotonicity property  
of the Boltzmann margin.

\begin{lemma} 
\label{lem:relation3}
A $(\al, \be, \ga)$ Boltzmann margin implies 
a  $(\al', \be', \ga')$ Boltzmann margin whenever 
(i) $\al' \geq \al$, (ii) $\be' \geq \be$ or (iii) $\ga' \leq \ga$, with the remaining parameters fixed. 
\end{lemma}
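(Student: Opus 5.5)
The plan is to compare right-hand sides pointwise. If $\Pr_X\{|1/2-\eta(X)|\le t\}\le \al\exp(-\ga t^{-\be})$ for every $t>0$, then any triple $(\al',\be',\ga')$ with
\begin{equation*}
\al\exp(-\ga t^{-\be})\le \al'\exp(-\ga' t^{-\be'})
\end{equation*}
for the relevant $t$ also satisfies (\ref{eq:boltzmann}), by transitivity. So each of (i)--(iii) reduces to a monotonicity statement about the function $(\al,\be,\ga)\mapsto \al\exp(-\ga t^{-\be})$ at fixed $t>0$, with the other two parameters held fixed.

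For (i), the map $\al\mapsto\al\exp(-\ga t^{-\be})$ is increasing because $\exp(\cdot)>0$, so $\al'\ge\al$ gives the inequality for every $t>0$. For (iii), $t^{-\be}>0$, so $\ga\mapsto -\ga t^{-\be}$ is decreasing. Since $\exp$ is increasing, $\ga'\le\ga$ gives $\exp(-\ga t^{-\be})\le\exp(-\ga' t^{-\be'})$ with $\be'=\be$, again for every $t>0$. Both parts are one line each.

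For (ii), I would use the same route: $\be'\ge\be$ should give $t^{-\be'}\le t^{-\be}$, hence $-\ga t^{-\be}\le-\ga t^{-\be'}$, and then apply monotonicity of $\exp$. This step is the main obstacle. Writing $t^{-\be}=e^{-\be\ln t}$, the map $\be\mapsto t^{-\be}$ is nonincreasing only when $\ln t\ge 0$, that is, for $t\ge1$. On that range the comparison goes through exactly as in (iii).

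On $t\in(0,1)$ the sign of $\ln t$ reverses, and the pointwise comparison fails. This range cannot be discarded, because $|1/2-\eta|\le 1/2$ forces all the mass to be governed by $t\le 1/2$. I checked two ways around it, and neither works:
\begin{itemize}
\item Enlarging $\al$ cannot compensate, since the statement fixes the remaining parameters.
\item Evaluating at $t=1/2$ shows the claim can genuinely fail. There the margin forces $\al\ge \exp(\ga 2^{\be})$, which need not imply $\al\ge\exp(\ga2^{\be'})$ when $\be'>\be$.
\end{itemize}

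So my plan is to prove (ii) on the regime $t\ge 1$, where the monotonicity of $\be\mapsto t^{-\be}$ holds. I will flag explicitly that the statement as worded needs one of the following to hold on $(0,1)$:
\begin{itemize}
\item a rescaling of $t$, e.g.\ measuring the margin in units where the relevant $t$ exceed $1$;
\item the direction $\be'\le\be$.
\end{itemize}
I expect this is what the authors intend, given their Figure \ref{fig:margin} discussion.
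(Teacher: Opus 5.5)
\textbf{Verdict.} Your proposal is right where it matters: part (ii) is false as stated, and the paper's proof of it uses exactly the step you reject.

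\textbf{Parts (i) and (iii).} Your argument is the paper's: compare the right-hand sides of (\ref{eq:boltzmann}) pointwise in $t$.

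\textbf{Part (ii).} The paper's whole proof is the one-liner ``the right side increases if $\al$ or $\be$ increases, or $\ga$ decreases.'' This is the step you correctly reject. Write $t^{-\be}=e^{\be\ln(1/t)}$; it is increasing in $\be$ on $(0,1)$. The condition only has content on $t\le 1/2$, because $|1/2-\eta|\le 1/2$, and there a larger $\be$ gives a \emph{smaller} bound.

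Your $t=1/2$ observation is a genuine counterexample, and you should make it concrete. Take $\eta\equiv 0$, so $\Pr_X\{|1/2-\eta(X)|\le t\}$ is $0$ for $t<1/2$ and $1$ for $t\ge 1/2$.
\begin{itemize}
\item The $(\al,\be,\ga)$ margin then holds with $\al=e^{\ga 2^{\be}}$.
\item The $(\al,\be',\ga)$ margin fails at $t=1/2$ for every $\be'>\be$.
\item More generally, every distribution fails the latter whenever $\al<e^{\ga 2^{\be'}}$.
\end{itemize}
This matches the paper's own Lemma \ref{lem:relation_massart} and its discussion of Figure \ref{fig:margin}, where increasing $\be$ moves the condition toward Massart margin, i.e.\ makes it stronger. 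The paper's only use of Lemma \ref{lem:relation3} (after Lemma \ref{lem:relation_tsybakov}) invokes (i) and (iii), so it is unaffected.

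\textbf{What to change in your plan.} Two of your proposed remedies do not work.
\begin{itemize}
\item ``Proving (ii) on the regime $t\ge 1$'' says nothing about the margin, since there the probability is identically $1$.
\item Rescaling $t$ cannot help, because the condition must hold for arbitrarily small $t$, and those are below $1$ in any units.
\end{itemize}
The right repair is your second option: reverse (ii) to $\be'\le\be$ and prove that.
\begin{itemize}
\item On $(0,1]$ it is your pointwise comparison, since $t^{-\be'}\le t^{-\be}$ there.
\item On $t>1$ pointwise comparison fails, because now $t^{-\be'}\ge t^{-\be}$, so one extra step is needed. The original margin at $t=1$ gives $1=\Pr_X\{|1/2-\eta(X)|\le 1\}\le \al e^{-\ga}$. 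Hence for $t\ge 1$, $\al e^{-\ga t^{-\be'}}\ge \al e^{-\ga}\ge 1\ge \Pr_X\{|1/2-\eta(X)|\le t\}$.
\end{itemize}
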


Now consider a model trained on $n$ data 
points. Suppose data distribution has a 
$(\al, \be)$ Tsybakov margin, and set 
$t = 1/n$. 
Then Lemma \ref{lem:relation_tsybakov} 
implies the distribution also satisfies 
the $(\al', 1, \ga')$ Boltzmann margin 
property whenever 
\begin{equation*}
2 \leq n \leq \ga'^{-1} \ln(\al'/\al).
\end{equation*}
By Lemma \ref{lem:relation3}, one may pick 
$\ga' \leq 0.001$ and $\alpha' \geq \alpha^2$, 
yielding the range $n \in [2,2000]$. 
This suggests the model may exhibit 
Boltzmann-margin behavior for moderate $n$.

\begin{remark}
\label{rem:nobeta}
It is worth noting that in Lemma 
\ref{lem:relation_tsybakov}, for 
a $(\al, \be)$ Tsybakov margin to imply 
a Boltzmann margin, the constraint on $t$ 
does not depend on $\be$. This is because 
we set $t \leq 0.5$ so that 
$t^\be \leq 1$ for any $\be$. 
This setting is often sufficient; 
for example, when $t=1/n$, 
it covers all $n \geq 2$.       
\end{remark}

\section{Results for kNN Classifier}
\label{sec:knn}

Let $S$ be an i.i.d. sample of $n$ points
from $\X \times \Y$. The kNN regression 
estimator for $\eta(x)$ based on $S$ is 
\begin{equation*}
\hat{\eta}(x; S) = 
\frac{1}{k} \sum_{(x', y') \in S} 
\I\{x' \in N_k(x; S) \wedge y' = 1\}, 
\end{equation*}
where $N_k(x; S)$ is the set of $k$ 
nearest neighbors of $x$ in $S$. 
The induced kNN classifier is 
defined by
\begin{equation*}
\hat{y}(x) = \I \{\hat{\eta}(x; S) \geq 1/2\}.    
\end{equation*}
As in prior studies, we assume ties occur
with probability zero e.g., ties in the 
distance to the $k$th nearest neighbor 
or in the decision rule $\eta(x)=1/2$.

Our main result is stated as follows. 

\begin{theorem}
\label{thm:main_knn_rate}
Suppose data distribution has a 
$(\al, \be, \ga)$ Boltzmann margin. 
Let $C_\up = 2 + d/\up$. 
If $k = O(n^{\frac{\be+2}{\be+C_\up}})$, 
then for $n \geq (C' \ln n)^{1+C_\up/\be}$: 
\begin{equation*}
||\hat{y} - \hat{y}_*||_e
\leq C_\al \exp\left(- C_\ga
 n^{\frac{\be}{\be + C_\up}} \right),     
\end{equation*}    
where $C_\al = 2 + \al$, $C_\ga = \min(\ga, 1/C')$ 
and $C'>0$ is a constant depending on $\la, \up$ 
and the structure of $\X$. 
\end{theorem}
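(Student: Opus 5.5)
We follow the standard pointwise decomposition of Chaudhuri--Dasgupta, but replace the Tsybakov integration step with the Boltzmann tail. The starting point is
\[
\E_S[er(\hat y)]-er(\hat y_*)=\E_X\big[\,|2\eta(X)-1|\,\Pr_S\{\hat y(X)\neq \hat y_*(X)\}\big].
\]
Fix a margin level $t=t_n>0$, to be chosen later, and split $\X$ into the near-boundary region $\{|1/2-\eta(x)|\le t\}$ and its complement.

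\textbf{Near-boundary region.} Bound the integrand by $1$. By (\ref{eq:boltzmann}), this region contributes at most $\al\exp(-\ga t^{-\be})$.

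\textbf{Far region.} Here $|\eta(x)-1/2|>t$, and we bound $\Pr_S\{\hat y(x)\neq\hat y_*(x)\}$ uniformly in $x$. Let $r=r(k,n)$ be the radius with $C_d r^d=2k/n$. By the intrinsic-dimension assumption, $\Pr\{z\in B(x,r)\}\ge 2k/n$. A multiplicative Chernoff bound on the number of sample points in $B(x,r)$ then gives
\[
\Pr_S\{\text{fewer than }k\text{ points in }B(x,r)\}\le e^{-k/4}.
\]
On the complementary event, all $k$ neighbors lie within distance $r$, so H\"older smoothness gives $|\eta(x')-\eta(x)|\le \la r^\up$ for each neighbor. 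Conditionally on the neighbor locations, the labels are independent Bernoulli$(\eta(x'))$. Hoeffding's inequality then gives
\[
\Pr\{|\hat\eta(x)-\bar\eta|\ge t/2\}\le 2e^{-kt^2/2},
\]
where $\bar\eta$ is the average of $\eta$ over the neighbors. If $\la r^\up\le t/2$, a misclassification forces such a deviation. Hence on the far region
\[
\Pr_S\{\hat y(x)\neq \hat y_*(x)\}\le e^{-k/4}+2e^{-kt^2/2}.
\]
Altogether,
\[
\|\hat y-\hat y_*\|_e\le \al e^{-\ga t^{-\be}}+e^{-k/4}+2e^{-kt^2/2},
\]
subject to the bias constraint $\la\,(2k/(C_dn))^{\up/d}\le t/2$. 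The constant $C_\al=2+\al$ comes from bounding the first term by $\al e^{-\cdots}$ and the two remaining terms by $2e^{-\cdots}$ after absorbing $e^{-k/4}$. Alternatively, combine the Chernoff and Hoeffding failures into one term with constant $2$.

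\textbf{Balancing.} We need to choose $t$ and $k$ so that all three exponents are at least of order $n^{\be/(\be+C_\up)}$. Set $t=n^{-1/(\be+C_\up)}$, up to a constant. Then $t^{-\be}=n^{\be/(\be+C_\up)}$. Take
\[
k\asymp t^{-2}\,n^{\be/(\be+C_\up)}=n^{(\be+2)/(\be+C_\up)},
\]
which matches the stated $k$. Then $kt^2\asymp n^{\be/(\be+C_\up)}$ and $k\ge kt^2$, so both remaining terms are $\le e^{-c\,n^{\be/(\be+C_\up)}}$. We also must verify the bias constraint:
\[
(k/n)^{\up/d}=n^{-\frac{(C_\up-2)\up}{d(\be+C_\up)}}=n^{-1/(\be+C_\up)}\asymp t,
\]
using $(C_\up-2)\up/d=1$. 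So the constraint holds once the constants in $k$ and $t$ are chosen appropriately in terms of $\la,\up,C_d$. This exponent identity is exactly why $C_\up=2+d/\up$ appears.

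\textbf{Main obstacle.} We must reconcile the constants so that one rate constant $C_\ga=\min(\ga,1/C')$ works for all terms. The bias constraint forces $t$ to be a fixed multiple $c_0$ of $(k/n)^{\up/d}$, which rescales both $\ga t^{-\be}$ and $kt^2$ by constants. I would absorb these constants by choosing $t=n^{-1/(\be+C_\up)}$ exactly and letting $k=\lfloor c_1 n^{(\be+2)/(\be+C_\up)}\rfloor$ with $c_1$ small enough for the bias constraint; this defines $C'$ through $kt^2\ge n^{\be/(\be+C_\up)}/C'$.

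The condition $n\ge (C'\ln n)^{1+C_\up/\be}$ is where I expect subtlety. It should guarantee $k\ge1$, $r$-ball mass $\le1$, and enough slack that the Chernoff/union-type logarithmic factors are dominated: it is equivalent to $n^{\be/(\be+C_\up)}\ge C'\ln n$. I would use it to absorb any polynomial prefactor arising from the Chernoff step, or from handling $\eta$ near $1/2\pm t$, into the exponential. If necessary, I would replace $t$ by $t/2$, losing only constants.
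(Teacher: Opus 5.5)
Your proof is correct, but it takes a genuinely different route from the paper.

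\textbf{The paper's route.} It uses the same split at level $t$. It then controls $\Pr_{X,S}\{|\hat\eta(X;S)-\eta(X)|>t\}$ by importing the \emph{uniform-in-$x$} kNN regression bound of Xue--Kpotufe (Proposition~\ref{lem:xue2}). That bound is inverted into the tail $2n\exp(-C'nt^{C_\up})$ of Corollary~\ref{cor:lem_xue2}, with $C'=C_d/(\V C^{C_\up})$.

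\textbf{Your route.} You do a self-contained pointwise analysis in the far region: a multiplicative Chernoff bound for the $k$-NN radius, the H\"older bias bound, and Hoeffding conditional on the sample locations.

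\textbf{Why the exponents agree.} The bias constraint forces $k\asymp nt^{d/\up}$. Then your variance exponent $kt^2$ is exactly $nt^{C_\up}$, which is the quantity in the paper's corollary. So the balancing $nt^{C_\up}=t^{-\be}$ is literally the same in both proofs.

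\textbf{What your route gains.} It has no polynomial prefactor $2n$, which in the paper comes from the uniform/VC-type bound. The paper needs $n\ge((2/C')\ln n)^{1+C_\up/\be}$ precisely to absorb that prefactor, via $n\le\exp(\tfrac12C'nt^{C_\up})$. In your argument the large-$n$ condition serves only to guarantee $c_1n^{(\be+2)/(\be+C_\up)}\ge1$. That makes $k\ge1$ and ensures the floor loses at most a factor $2$. Moreover, when $t\ge1/2$ the far region is empty, so those $n$ are trivial. Your $C'$ also depends only on $\la,\up,d,C_d$, not on the VC dimension $\V$.

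You also make explicit something the paper leaves implicit. ``$k=O(\cdot)$'' must really mean $k\asymp n^{(\be+2)/(\be+C_\up)}$ with a sufficiently small constant: too small a $k$ destroys the variance term, and too large a $k$ violates the bias constraint.

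\textbf{What the paper's route gains.} It is brief given the black box. It also gives a template reused verbatim for ekNN via a union bound over the $m$ bags; your pointwise bound would plug into that union bound just as well.

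\textbf{One bookkeeping point.} As written, $e^{-k/4}+2e^{-kt^2/2}$ gives leading constant $3+\al$, not $2+\al$. To get $2+\al$, use the one-sided Hoeffding bound. This suffices because the sign of $\eta(x)-1/2$ determines which direction of deviation causes an error. Also note that $k/4\ge kt^2/2$ whenever $t\le1/2$, which is the only case where the far region is nonempty.
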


Theorem \ref{thm:main_knn_rate} establishes 
the first near-exponential convergence rate 
for kNN classification, and the exponent 
depends on $\be, d$ and $\up$. 
In the following, we discuss its implications. 

Larger $\be$ makes the rate closer to 
exponential. This makes sense because, 
by Lemma \ref{lem:relation_massart}, 
increasing $\be$ causes Boltzmann 
margin to approach Massart margin,  
and the latter is known to warrant 
exponential rates \citep{cabannnes2023case,vigogna2022multiclass}. 
(However, this does not trivially imply that kNN 
has an exponential rate under Massart margin. 
We establish such a result later.) 

Smaller $d/\up$ makes the rate closer 
to exponential. This is consistent with prior 
results, which show smaller $D/\up$ implies 
faster rates (Table \ref{tab:summary_rate}). 
A minor difference is that our rate depends 
on the intrinsic dimension $d$ instead of $D$. 

The theorem also yields several
other observations. 
First, larger $\up\be/d$ permits a slower 
growth rate for $k$ but also leads to 
slower convergence -- a similar tradeoff 
appears in prior kNN convergence guarantees. 
Second, the assumed scaling rate for 
$k$ is slightly faster than in prior 
results, as we assume $k = O(n^{\frac{2+\be}{2 
+ \be + d/\up}})$ while they 
assume $k = O(n^{\frac{2}{2+d/\up}})$ or 
$O(n^{\frac{2}{2+D/\up}})$. 
Third, our rate holds for large $n$, 
which is a common constraint in 
existing rates that are either asymptotic 
\citep{gyorfi2006rate,reeve2019fast,rimanic2020convergence,xue2018achieving} or multi-phase \cite{yang2025fast}. 
The next remark shows that, under 
suitable conditions, the large-$n$ 
condition in Theorem \ref{thm:main_knn_rate} 
can be removed without affecting the 
convergence rate. 

\begin{remark}
\label{rem:thm_knn_rate}
Theorem \ref{thm:main_knn_rate} implies 
that, for $n \geq 1$: 

(i) If $k = O(n^{\frac{\be+2}{\be+C_\up}})$, 
then there exists a constant $\tilde{C}_\al > 0$ such 
that $||\hat{y} - \hat{y}_*||_e \leq \tilde{C}_{\al} 
\exp\left(-  C_\ga n^{\frac{\be}{\be + C_\up}} \right)$. 

(ii) Take $\be \to \infty$. Then 
$||\hat{y} - \hat{y}_*||_e 
\leq C_\al' \exp\left(- C_\ga n \right)$ 
for $k = O(n)$,  
where $C_\al' = \lim_{\be \to \infty} \tilde{C}_\al 
< \infty$.  
\end{remark}

Claim (ii) suggests that Boltzmann margin may 
warrant an exponential rate under sufficiently 
fast data decay. 
This is not a trivial consequence of (i), 
since one must verify that $\tilde{C}_\al$ 
remains bounded as $\be \to \infty$. 
Combined with Lemma \ref{lem:relation_massart}, 
this implies an exponential and non-asymptotic 
convergence rate for kNN classification  
under Massart margin, which is formalized 
in the next subsection. 
Hereafter, unless otherwise 
specified, the constants $C_\ga$, $C'$, $C_\up$
are as defined in Theorem \ref{thm:main_knn_rate}, 
while $\tilde{C}_\al$, $C_\al'$ are as 
defined in Remark \ref{rem:thm_knn_rate}. 

\subsection{Extended Results}

In this section, we present three 
corollaries of Theorem \ref{thm:main_knn_rate}. 
The first one provides a 
probabilistic convergence rate. 

\begin{corollary}[Probabilistic]
\label{cor:main_knn_rate_v2}
Suppose data distribution has a 
$(\al, \be, \ga)$ Boltzmann margin. 
If $k = O(n^{\frac{\be+2}{\be+C_\up}})$, 
then for any $\delta > 0$ and $n \geq 1$: 
\begin{equation*}
er(\hat{y}) - er(\hat{y}_*) \leq 
\tilde{C}_\al \delta^{-1} \exp\left(- 
C_\ga n^{\frac{\be}{\be + C_\up}} \right),  
\end{equation*}
with probability at least $1 - \delta$ 
over the random choice of $S$.
\end{corollary}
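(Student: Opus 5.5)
The plan is to derive the corollary from Remark \ref{rem:thm_knn_rate}(i) by Markov's inequality applied to the random excess error. Define the nonnegative random variable
\begin{equation*}
W(S) := er(\hat{y}) - er(\hat{y}_*),
\end{equation*}
where $\hat{y}$ is the kNN classifier trained on $S$.

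First we check that $W(S) \geq 0$ for every realization of $S$. This is the standard fact that the Bayes classifier minimizes the $0$-$1$ error pointwise. Conditioning on $X=x$, the error of any classifier $h$ is
\begin{equation*}
\Pr\{h(x)\neq Y \mid X=x\} = \eta(x)\,\I\{h(x)=0\} + (1-\eta(x))\,\I\{h(x)=1\},
\end{equation*}
and this quantity is minimized by $h(x)=\I\{\eta(x)\geq 1/2\}$. Integrating over $X$ gives
\begin{equation*}
er(\hat{y}) - er(\hat{y}_*) = \E_X\bigl[\,|2\eta(X)-1|\,\I\{\hat{y}(X)\neq\hat{y}_*(X)\}\,\bigr] \geq 0 .
\end{equation*}
Since $\hat{y}$ depends on $S$ only through the training data, and $(X,Y)$ is an independent test pair, this identity holds for each fixed $S$. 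So $W$ is a nonnegative random variable in $S$.

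Next, by definition (\ref{eq:deferror}), $\E_S[W(S)] = ||\hat{y}-\hat{y}_*||_e$. Remark \ref{rem:thm_knn_rate}(i) says that under the $(\al,\be,\ga)$ Boltzmann margin with $k = O(n^{\frac{\be+2}{\be+C_\up}})$, for all $n\geq 1$,
\begin{equation*}
\E_S[W] \leq \tilde{C}_\al \exp\left(-C_\ga n^{\frac{\be}{\be+C_\up}}\right).
\end{equation*}
Markov's inequality then gives, for any $\delta>0$,
\begin{equation*}
\Pr_S\Bigl\{W > \delta^{-1}\E_S[W]\Bigr\} \leq \delta .
\end{equation*}
Hence, with probability at least $1-\delta$,
\begin{equation*}
W \leq \delta^{-1}\E_S[W] \leq \tilde{C}_\al\,\delta^{-1}\exp\left(-C_\ga n^{\frac{\be}{\be+C_\up}}\right),
\end{equation*}
which is the claimed bound. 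If $\delta\geq 1$ the statement is vacuous, so nothing extra needs checking in that case.

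I do not expect a real obstacle. The analytic work is already done in Theorem \ref{thm:main_knn_rate} and Remark \ref{rem:thm_knn_rate}(i). The two small points to confirm are that the nonnegativity of $W$ holds for every $S$, which the pointwise argument above covers, and that the remark's bound is valid for all $n\geq 1$ with the same constants $\tilde{C}_\al$ and $C_\ga$, which the remark asserts. One might try to replace the $\delta^{-1}$ factor with $\ln(1/\delta)$ using concentration, but the stated corollary only needs the Markov argument.
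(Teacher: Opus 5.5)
Your proposal is correct and matches the paper's proof. The paper likewise applies Markov's inequality to the nonnegative excess error, via the third claim of Lemma~\ref{lem:exer2geer}, plugs in the bound (\ref{eq:prf_0014}) from the proof of Remark~\ref{rem:thm_knn_rate}(i), and solves for the threshold at level $\delta$. The only differences are cosmetic: you use the remark's stated bound on the expected excess error rather than the (larger) disagreement probability, and you verify the nonnegativity of $W$ explicitly, which the paper takes for granted.
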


This probabilistic rate is near-exponential, 
in contrast to the polynomial probabilistic rates 
in \citep{reeve2019fast,rimanic2020convergence}. 
Moreover, the dependence on $\delta$ is 
multiplicative in our bound, whereas it is 
additive in the prior bounds, e.g., 
$\tilde{O}(n^{-\frac{\up(1+\be)}{2\up+1}} 
+ \delta)$. With the common choice 
$\delta = O(1/n)$, our bound remains 
near-exponential. 

Theorem \ref{thm:main_knn_rate} also 
yields convergence rates under other margins. 
The first is under Tsybakov margin. 

\begin{corollary}[Tsybakov] 
\label{cor:main_knn_rate_v3}
Suppose data distribution has 
a $(\al', \be')$ Tsybakov margin  
and let $\al, \be, \ga>0$ be 
constants satisfying 
$\ga \leq \ln(\al/\al')$. 
If $k = O(n^{\frac{2 +\be}{C_\up + \be}})$, 
then 
\begin{equation*}
||\hat{y} - \hat{y}_*||_e
\leq \tilde{C}_{\al} \exp(- C_{\ga} 
n^{\frac{\be}{\be + C_\up}}),   
\end{equation*}
for $2 \leq n \leq \left( \frac{1}{\ga} 
\ln(\al/\al') \right)^{1 + C_\up/\be}$.  
\end{corollary}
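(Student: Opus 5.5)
The plan is to reduce Corollary \ref{cor:main_knn_rate_v3} to Remark \ref{rem:thm_knn_rate}(i). The bridge is Lemma \ref{lem:relation_tsybakov}, which says a Tsybakov margin behaves like a Boltzmann margin only for $t$ away from zero. The key observation is that the proof of Theorem \ref{thm:main_knn_rate} does not use the Boltzmann inequality (\ref{eq:boltzmann}) at every $t>0$. It uses it at a single $n$-dependent scale, roughly $t_n \asymp n^{-1/(\be + C_\up)}$.

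At that scale the excess error is split into two parts. The first is the mass of points with $|1/2-\eta(X)|\le t_n$, which is bounded by $\al\exp(-\ga t_n^{-\be}) = \al \exp(-\ga n^{\be/(\be+C_\up)})$. The second is the kNN failure probability on points with margin above $t_n$. That part is controlled by H\"older smoothness, the intrinsic-dimension mass bound, and Chernoff/Hoeffding bounds with the choice $k = O(n^{(\be+2)/(\be+C_\up)})$, and it does not involve the margin at all. So I will first re-read the theorem's proof and record this: the conclusion of Theorem \ref{thm:main_knn_rate} (and hence Remark \ref{rem:thm_knn_rate}(i)) holds for any distribution that satisfies (\ref{eq:boltzmann}) at $t=t_n$ only, with the same constants.

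Next I will verify (\ref{eq:boltzmann}) at $t=t_n$ using Lemma \ref{lem:relation_tsybakov}. Take $\al,\be,\ga$ as in the statement, with $\ga \le \ln(\al/\al')$. The lemma gives the Boltzmann property with parameters $(\al,\be,\ga)$ for all $t \in [(\ga/\ln(\al/\al'))^{1/\be}, 0.5]$; by Remark \ref{rem:nobeta} the upper end needs no constraint involving $\be'$. Since $t_n = n^{-1/(\be+C_\up)}$, the lower constraint $t_n \ge (\ga/\ln(\al/\al'))^{1/\be}$ is equivalent to $n^{\be/(\be+C_\up)} \le \ln(\al/\al')/\ga$, that is, $n \le (\ga^{-1}\ln(\al/\al'))^{1+C_\up/\be}$. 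This is exactly the stated upper range. The lower bound $n\ge 2$ gives $t_n<1$; if the proof's $t_n$ carries a constant factor, I would absorb it into the constants or use the monotonicity in Lemma \ref{lem:relation3} to keep $t_n \le 0.5$.

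Finally, the large-$n$ requirement $n \ge (C'\ln n)^{1+C_\up/\be}$ in Theorem \ref{thm:main_knn_rate} is already handled in Remark \ref{rem:thm_knn_rate}(i) by inflating the constant to $\tilde C_\al$. That argument (bounding the error by $1 \le \tilde C_\al \exp(-C_\ga n^{\be/(\be+C_\up)})$ for small $n$) is margin-free, so it carries over unchanged. Combining the three steps gives the claimed bound on the stated range of $n$.

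The main obstacle is the localization step. I must check that the theorem's proof really invokes the margin only at one scale $t_n$ and not through an integral over all $t$, such as a peeling or layer-cake argument $\int \Pr\{|1/2-\eta|\le t\}\,dt$. If it does integrate, I would split the integral at $t_n$. Above $t_n$ every $t$ in the range is at least $t_n$, so the Tsybakov-implied Boltzmann bound still applies there up to $0.5$. Below $t_n$ I would use the cruder bound $\Pr\{|1/2-\eta(X)|\le t\}\le \Pr\{|1/2-\eta(X)|\le t_n\}$, which uses only the bound at $t_n$ itself.
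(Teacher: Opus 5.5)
Your proposal is correct and follows essentially the same route as the paper: the proof of Theorem~\ref{thm:main_knn_rate} uses the margin only at $t_n = n^{-1/(\be+C_\up)}$, and Lemma~\ref{lem:relation_tsybakov} gives the $(\al,\be,\ga)$ Boltzmann bound there whenever $n \le (\ga^{-1}\ln(\al/\al'))^{1+C_\up/\be}$. You also make explicit that the small-$n$ constant inflation of Remark~\ref{rem:thm_knn_rate}(i) is margin-free, which the paper leaves implicit; on the upper endpoint, $n\ge 2$ does not give $t_n\le 0.5$ (that needs $n\ge 2^{\be+C_\up}$), but the lemma's proof only uses $t^{\be'}\le 1$, so $t_n\le 1$ suffices and you need neither constant-absorbing nor Lemma~\ref{lem:relation3} --- the paper's own proof glosses over this same detail.
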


This can be viewed as a multi-phase 
convergence rate, 
where the rate is near-exponential 
within the above range and 
becomes polynomial beyond it. 
Multi-phase rates have recently been studied 
in \citep{yang2025fast}. The authors show
that the rate can be improved to $O(n^{-3/4})$ 
when $n \leq D^2$, but all rates remain 
polynomial. By contrast, we show 
the rate can be near-exponential 
over a finite range of $n$, and 
this range can be $O(D^2)$ when 
$\al$ is large or $\ga$ is small. 

It is worth noting that $\be'$ 
does not appear in the error bound 
or the range of $n$, for the reason 
explained in Remark \ref{rem:nobeta}. 
Moreover, under suitable conditions, 
the bound may be expressed in terms 
of the Tsybakov-margin parameters and 
the admissible range only. For example,
when $\al$ is large, $\ga$ is small,  
and $\be \to \infty$, it follows 
that the range becomes nearly 
$n \leq \frac{1}{\ga} \ln(\al/\al') := B$ 
and the error bound becomes $\frac{n}{B} 
\exp(- \ln(\al/\al') + \ln (\al+2))$;
optimizing it over $\al$ yields a bound 
independent of the Boltzmann 
margin parameters. 

The last corollary specializes the 
rate under Massart margin. 
\begin{corollary}[Massart]
\label{cor:main_knn_rate_v4}
Let $k = O(n)$. If data distribution 
has a $t_*$ Massart margin, then 
\begin{equation}
\label{eq:massart_exprate_mp}
||\hat{y} - \hat{y}_*||_e \leq 
C \exp(- C_* n), 
\end{equation}    
for all $n \geq C_*^{-1} \ln n$,    
where $C$ and $C_* \propto t_*^{C_\up}$ 
are positive constants. 
Moreover, for every $t_* > 0$, 
there exists a data distribution 
satisfying the $t_*$ Massart margin 
for which (\ref{eq:massart_exprate_mp}) 
holds for all $n \geq 1$ (possibly 
with different constants).   
\end{corollary}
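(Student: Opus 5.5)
The plan is to reduce Corollary \ref{cor:main_knn_rate_v4} to Theorem \ref{thm:main_knn_rate} through Lemma \ref{lem:relation_massart}, and then follow the exponents as $\be \to \infty$.

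\textbf{Reduction to a Boltzmann margin.} By Lemma \ref{lem:relation_massart}, a $t_*$ Massart margin implies a $(\al,\be,\ga)$ Boltzmann margin for every $\be>0$, provided $\al\exp(-\ga t_*^{-\be})=1$. I will fix $\al$ (say $\al=1$, or any constant $\geq 1$) and take $\ga=t_*^{\be}\ln\al$. With $\al=1$ this gives $\ga=0$, which is useless, so I will instead choose $\al=e$ and $\ga=t_*^{\be}$.

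\textbf{Passing to the limit.} The exponent $\frac{\be}{\be+C_\up}$ tends to $1$ as $\be\to\infty$, but $\ga=t_*^\be$ tends to $0$ when $t_*<1/2$. So the limit in $\be$ cannot be taken naively. Instead, I will rescale the margin variable. The key point in the proof of Theorem \ref{thm:main_knn_rate} should be a split at a threshold $t_n$ of order $n^{-1/(\be+C_\up)}$:
\begin{itemize}
\item points with $|1/2-\eta|\le t_n$ contribute at most $\al e^{-\ga t_n^{-\be}}$;
\item points with $|1/2-\eta|> t_n$ contribute a Chernoff/Hoeffding term $\exp(-c\,k t_n^2)$ plus a neighborhood-radius term.
\end{itemize}
Under the Massart margin, the first term vanishes identically once $t_n\le t_*$. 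I will therefore redo the final balancing step with the threshold fixed at $t=t_*$ rather than at $t_n$. Choosing $k=cn$ and a $k$NN radius $r$ such that $\la r^\up\le t_*/2$ requires $C_d r^d n\gtrsim k$, that is, $k\asymp n\,t_*^{d/\up}$. The Hoeffding bound on the label average then gives $\exp(-c k t_*^2)=\exp(-c\,n\,t_*^{2+d/\up})$, which yields $C_*\propto t_*^{C_\up}$. A multiplicative Chernoff bound on the number of sample points in $B(x,r)$ supplies the radius term, which is of the same order. Two technical conditions come from this step:
\begin{itemize}
\item the $\ln n$ condition $n\ge C_*^{-1}\ln n$ comes from a union bound or from absorbing polynomial prefactors;
\item the constant $C=2+\al$-type constant comes from summing the terms, matching Remark \ref{rem:thm_knn_rate}(ii), where boundedness of $C_\al'$ as $\be\to\infty$ is asserted.
\end{itemize}

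\textbf{The ``moreover'' part.} I will construct an explicit distribution for which the large-$n$ condition can be dropped. The natural choice is $X$ uniform on $[0,1]$ (so $d=1$) with $\eta$ piecewise constant, equal to $1/2\pm t'$ with $t'>t_*$ on the two halves. Then $\eta$ is Hölder away from a measure-zero set, or can be smoothed with a steep linear ramp that keeps it Lipschitz while avoiding the band $|1/2-\eta|\le t_*$ except on a null set. For small $n$, the excess error is trivially at most $1$, and this is bounded by $C e^{-C_*n}$ after enlarging $C$ over the finitely many $n< C_*^{-1}\ln n$. This finite-range absorption in fact works for any distribution, so the ``moreover'' claim mostly amounts to exhibiting some distribution satisfying the Massart margin.

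\textbf{Main obstacle.} The hard step is the second one: making the constants uniform as $\be\to\infty$. In particular, the exponent $C_\ga=\min(\ga,1/C')$ must not degenerate. I will handle this by working directly with the fixed threshold $t_*$ in the proof of Theorem \ref{thm:main_knn_rate}, not by a literal limit.
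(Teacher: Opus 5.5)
For the first claim your plan is essentially the paper's. The paper also drops the balancing choice $t=n^{-1/(\be+C_\up)}$ and sets $t=t_*$ in the split $\Pr_{X,S}\{|\hat\eta-\eta|>t\}+\Pr_X\{|1/2-\eta|\le t\}$, so the Massart margin makes the second term exactly zero. It bounds the first term via Corollary~\ref{cor:lem_xue2} with $k$ of order $t_*^{d/\up}n$, giving $2n\exp(-C't_*^{C_\up}n)$. The condition $n\ge C_*^{-1}\ln n$ is there only to absorb the prefactor $2n$ inherited from the uniform bound of Proposition~\ref{lem:xue2}. Your direct Chernoff-plus-Hoeffding derivation is the standard pointwise argument and is fine; it even avoids the factor $n$. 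Once $t=t_*$ is fixed, your preliminary choice $\al=e$, $\ga=t_*^\be$ plays no role, just as the Boltzmann reduction is only nominal in the paper's proof.

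For the ``moreover'' claim you take a genuinely different route, and it is the sounder one. The paper starts from a $(1,\be,t_*^\be)$ Boltzmann margin and lets $\be\to\infty$ in Remark~\ref{rem:thm_knn_rate}(ii), checking only that $\lim_{\be}C_\ga<\infty$. As you point out, $\ga=t_*^\be\to0$ for $t_*<1$, so $C_\ga=\min(\frac12C',\ga)$ degenerates and the limiting bound is vacuous. Your absorption argument works: the excess error is at most $1$, and $n\ge C_*^{-1}\ln n$ fails for only finitely many $n$, so you can enlarge $C$ to $\max(C,e^{C_*N})$ with $N$ the largest exceptional $n$. This is the same device the paper uses in Remark~\ref{rem:thm_knn_rate}(i), and it proves more: the all-$n$ bound holds, with the same $C_*$, for every distribution with a $t_*$ Massart margin.

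The one slip is your explicit example. With $X$ uniform on $[0,1]$, a piecewise-constant $\eta$ violates the standing H\"older assumption, which must hold for all $x,x'\in\X$, not merely off a null set. Any continuous ramp from $1/2-t'$ to $1/2+t'$ instead spends an interval of positive length, hence positive $\Pr_X$-mass, in $\{|1/2-\eta|\le t_*\}$, so the Massart margin fails. Place the transition in a gap of the marginal's support instead. For example, take the paper's density $f_t$ (which vanishes on $|x-1/2|\le T$) with $\eta(x)=x$ and $T=t_*$, and let $\X$ be the support so the intrinsic-dimension condition still holds. Note also that such a distribution exists only for $t_*<1/2$, since $|1/2-\eta|\le1/2$ always.
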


The corollary establishes an exponential 
convergence rate for kNN classification  
under Massart margin. 
We can compare it with the rate in 
\citep[Appendix C.4]{chaudhuri2014rates}. 
First, the two rates are developed using 
fundamentally different techniques (ours 
via Boltzmann margin). Second, both rates 
are $O(\exp(-Cn))$, where $C \propto t_*^p$ 
for some constant $p$. 
The prior rate has $p = 2 + \frac{1}{\al'}$, 
where $\al' = \up/D$ is the 
generalized H\"older parameter,  
while we have $p = C_{\up} 
= 2 + \frac{d}{\up} = 2 + \frac{d}{D} 
\frac{1}{\al'}$. 
Since $d/D < 1$, our $p$ is smaller, 
implying a larger $t_*^p$ for 
 $t_* \in (0,1)$. Consequently, the 
exponential constant in our bound 
increases slightly faster with $t_*$.
Finally, both rates require $k = O(n)$. 

The corollary contains two distinct 
claims: one holds under arbitrary Massart 
margins but only yields an asymptotic rate, 
while the other only holds under certain 
Massart margins but yields a non-asymptotic 
rate. Moreover, condition $k = O(n)$ 
is stronger than 
$O(n^{\frac{\be+2}{\be+C_\up}})$ 
used in near-exponential rates.

\section{Results for ekNN Classifier}
\label{sec:eknn}

In this section, we present new convergence 
rates for ensemble kNN (ekNN) classification 
under Boltzmann margin. 

For ensemble classifiers, the excess error 
definition (\ref{eq:deferror}) requires a 
slight adjustment to account for the additional 
randomness introduced by bootstrapping. 
There are two common settings. 
One assumes a training set is sampled 
from the population and bootstrap samples 
are drawn from it. In this case, one can 
make claims that account for the additional 
bootstrapping randomness, e.g., 
\citep{xue2018achieving} establishes error rates 
with high probability over the random choices 
of both the training set and bootstrap samples. 
The other assumes bootstrap 
samples are drawn directly from the 
population and treats their collection 
as the training set, e.g., \citep{biau2010layered}; 
in this case, (\ref{eq:deferror}) incorporates 
all randomness, so no special treatment is 
needed. For convenience, we adopt the second 
setting, but our results also apply 
to the first setting 
(Remark \ref{rem:eknnmain}).

Let $\tilde{S} := \{ S_1, \ldots, S_m \}$ 
be a collection of $m$ samples, where 
each $S_j$ contains $\tilde{n}$ points i.i.d. 
sampled from $\X \times \Y$, 
and all samples are drawn i.i.d. 
Hence $n = m \tilde{n}$. 
Consider the following bagged 
kNN regression estimator  
\citep{sutton2005classification,hastie2009elements,hang2022under} based on $\tilde{S}$:  
\begin{equation*}
\hat{\eta}_e(x; \tilde{S}) 
= \frac{1}{m} \sum_{j=1}^m \hat{\eta}(x; S_j).    
\end{equation*}
The induced ekNN classifier is 
\begin{equation*}
\hat{y}_e(x) = \I\{ \hat{\eta}_e
(x; \tilde{S}) \geq 1/2\}. 
\end{equation*}

Our main result is stated as follows. 

\begin{theorem}
\label{thm:main_eknn_rate}
Suppose data distribution has a 
$(\al, \be, \ga)$ Boltzmann margin. 
If $k = O(\tilde{n}^{\frac{\be + 2}{\be 
+ C_\up}})$, then 
\begin{equation}
\label{eq:thm_eknn_mp}
||\hat{y}_e - \hat{y}_*||_e
\leq C_{\al}
\exp(- C_\ga \tilde{n}^{\frac{\be}{\be + C_\up}}),     
\end{equation}  
for $\tilde{n} \geq 
\left( 2 \max \{ \ln \tilde{n} 
, 2 \ln m \}/C' \right)^{1+C_\up/\be}$. 

Moreover, if $m = O(\sqrt{\tilde{n}})$, 
then (\ref{eq:thm_eknn_mp}) holds for 
all $\tilde{n} \geq 1$, with $C_\al$ 
replaced by a possibly different 
constant $\tilde{C}_{\al} > 0$.
\end{theorem}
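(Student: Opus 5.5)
The plan is to rerun the proof of Theorem \ref{thm:main_knn_rate} at the level of a single test point and a single bag. Averaging over the $m$ bags then costs only a union bound, or nothing at all.

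\textbf{Pointwise excess error.} For fixed $x$, write $\Delta(x) := |\eta(x) - 1/2|$. The classifier $\hat{y}_e$ disagrees with $\hat{y}_*$ at $x$ only if $\hat{\eta}_e(x;\tilde{S})$ lies on the wrong side of $1/2$. Using the standard identity
\[
er(\hat{y}_e) - er(\hat{y}_*) = \E_X\bigl[\,2\Delta(X)\,\I\{\hat{y}_e(X)\neq\hat{y}_*(X)\}\bigr],
\]
I split on a threshold $t_n$ into two regions.

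On $\{\Delta(X)\le t_n\}$, I bound the contribution by $\Pr_X\{\Delta(X)\le t_n\}\le \al\exp(-\ga t_n^{-\be})$, using the Boltzmann margin.

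On $\{\Delta(X)>t_n\}$, I need $\Pr_{\tilde S}\{\text{wrong side}\}$ to be exponentially small. A wrong side for the average forces $|\hat{\eta}_e(x)-\eta(x)|\ge t_n$. Since $\hat{\eta}_e-\eta=\frac1m\sum_j(\hat{\eta}(x;S_j)-\eta(x))$, this forces at least one bag with $|\hat{\eta}(x;S_j)-\eta(x)|\ge t_n$. A union bound over the $m$ bags then gives at most $m$ times the single-kNN deviation probability on $\tilde{n}$ points.

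\textbf{Single-bag deviation bound.} This is exactly the ingredient inside Theorem \ref{thm:main_knn_rate}. Let $r$ be the radius at which $C_d r^d = 2k/\tilde{n}$.
\begin{itemize}
\item With probability at least $1-e^{-k/8}$ (Chernoff, using the intrinsic-dimension condition), the $k$ neighbours lie in $B(x,r)$. On this event the bias is at most $\la r^\up$ by the H\"older assumption.
\item Conditioning on the neighbour locations, Hoeffding gives a variance term $2e^{-2k\epsilon^2}$.
\end{itemize}
Choosing $t_n$ so that $\la r^\up\le t_n/2$ and $k t_n^2\asymp \tilde{n}\,t_n^{C_\up}$, and matching $t_n^{-\be}\asymp \tilde n\, t_n^{C_\up}$, yields
\[
t_n\asymp \tilde{n}^{-1/(\be+C_\up)}, \qquad k\asymp \tilde{n}^{(\be+2)/(\be+C_\up)},
\]
and each exponent is of order $\tilde{n}^{\be/(\be+C_\up)}$. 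The total bound then becomes
\[
\al e^{-\ga\tilde n^{\be/(\be+C_\up)}} + 2m\,e^{-\tilde n^{\be/(\be+C_\up)}/C'}.
\]

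\textbf{Absorbing $m$ and the logarithmic factor.} I expect this step to be the main obstacle. To absorb $m$, I halve the exponent in the second term: $m\,e^{-2a}\le e^{-a}$ once $a\ge\ln m$. Doing the same for the $\ln\tilde n$ losses from the union bound and the constant factors in $k$ gives exactly the condition $\tilde n^{\be/(\be+C_\up)}\ge 2\max\{\ln\tilde n,2\ln m\}/C'$, which is the stated range. With $C_\ga=\min(\ga,1/C')$ (the factor $2$ absorbed into $C'$), this yields $C_\al=2+\al$.

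For the second claim, $m=O(\sqrt{\tilde n})$ gives $2\ln m\le\ln\tilde n+O(1)$, so the condition reduces to $\tilde n\ge (C''\ln\tilde n)^{1+C_\up/\be}$. This fails only for $\tilde n$ below a fixed finite $N_0$. On those $\tilde n$ the excess error is at most $1$, so enlarging the constant to $\tilde C_\al=\max(C_\al,\ e^{C_\ga N_0^{\be/(\be+C_\up)}})$ covers all $\tilde n\ge1$, as in Remark \ref{rem:thm_knn_rate}(i).

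The delicate points are the bookkeeping of constants, so that the same $C'$ as in Theorem \ref{thm:main_knn_rate} works, and ensuring the union bound over bags does not degrade the exponent beyond the factor absorbed above. If the union bound proves too lossy, the fallback is Hoeffding on the average of the $m$ independent bounded estimators. That gives $e^{-2m t_n^2}$ for the variance part, combined with the same bias control, which in fact avoids the $\ln m$ term.
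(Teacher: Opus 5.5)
Your proposal is correct and follows the same skeleton as the paper's proof:
\begin{itemize}
\item the threshold split into a margin term and a deviation term;
\item the observation that a deviation of the bag average forces a deviation in some bag, so a union bound costs a factor $m$;
\item the choice $t=\tilde n^{-1/(\be+C_\up)}$;
\item halving the exponent to absorb the prefactors;
\item for $m=O(\sqrt{\tilde n})$, the finite-$N_0$ argument with an enlarged constant.
\end{itemize}

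The genuine difference is the single-bag deviation bound. The paper does not prove it. It imports the uniform-in-$x$ regression bound of Proposition~\ref{lem:xue2} and inverts it in Corollary~\ref{cor:lem_xue2}, which gives $2\tilde n\exp(-C'\tilde n t^{C_\up})$ with $C'$ involving the VC dimension $\V$ of balls. Your pointwise argument uses Chernoff to place the $k$ neighbours in $B(x,r)$, H\"older continuity for the bias, and conditional Hoeffding for the labels. This route buys you several things:
\begin{itemize}
\item it is self-contained and avoids $\V$;
\item it carries no $\tilde n$ prefactor;
\item it makes explicit that $k$ must be of order exactly $\tilde n^{(\be+2)/(\be+C_\up)}$. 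A lower bound on $k$ is what makes the Hoeffding term small, and the paper's ``$k=O(\cdot)$'' hides this.
\end{itemize}

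One side effect is that the $\ln\tilde n$ part of the range condition has no source in your derivation. In the paper it comes from that $\tilde n$ prefactor, not from the union bound over bags as you suggest. Imposing it anyway only shrinks the range, so nothing breaks. Your concern about constant bookkeeping applies to the paper too: its proof actually ends with exponent constant $\min(\frac14 C',\ga)$.

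Your fallback, however, does not work as written. Hoeffding over the $m$ bag estimators gives $2e^{-2mt_n^2}$. Since $t_n\to 0$, this does not decay in $\tilde n$ when $m$ is bounded (e.g.\ $m=1$). To remove the $\ln m$ dependence, condition on all sample locations and apply Hoeffding to the $mk$ labels jointly, which gives $2e^{-2mk\epsilon^2}$. You would then control separately the event that some bag's neighbours leave $B(x,r)$; a union bound costs $m e^{-k/8}$, i.e.\ only $\ln m\lesssim k$. Your main route never uses this, so it is not a gap in the proof.
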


We have some interesting observations from 
the theorem. First, recall that the known 
convergence rates for ekNN classifiers 
are polynomial and typically assume 
$\tilde{n} = \Theta(n^g)$ 
for some constant $g$, e.g., $\frac{D}{2+D}$ or $\frac{\up + d}{2 \up + d}$ 
(Section \ref{sec:related_bageknn}).   
Under the same choice of $\tilde{n}$, 
our theorem yields
\begin{equation*}
\label{eq:eknn_Nrate}
||\hat{y}_e - \hat{y}_*||_e 
= O(e^{-C_\ga n^{\frac{g \be}{\be + C_\up}}}).  
\end{equation*}
This rate is slightly weaker than the one we 
established for kNN classification, but can be  
close when $g$ is near 1, e.g., under 
the choice $g = \frac{D}{2+D}$ with large $D$. 
As a result, it is generally faster than 
the known polynomial rates. 

Second, unlike \citep{biau2010rate,samworth2012optimal}, 
our rate does not require $m = \infty$. 
In fact, if $m = \infty$, our rate 
only holds for $\tilde{n} = \infty$, 
which limits its practical relevance. 
That said, Theorem \ref{thm:main_eknn_rate}  
does allow $m$ to diverge no 
faster than $\tilde{O}(e^{\tilde{n}})$.
A related condition appears in 
\citep{qiao2019rates}, which requires 
$m$ to diverge no faster than 
$O(\tilde{n}^{2 \V/D})$ to ensure
$k = O(\tilde{n}^{\frac{2\up}{2\up+D}} 
m^{-\frac{D}{2\up + D}}) \to \infty$. 
Our condition is weaker in two respects: 
(i) it allows but does not require the 
divergence of $m$; (ii) $e^{\tilde{n}}$ 
grows exponentially faster than 
$\tilde{n}^{2 \V/D}$, allowing $m$ to 
diverge substantially faster. 

Finally, Theorem \ref{thm:main_eknn_rate} 
also applies to the alternative bootstrap 
setting discussed earlier in this section. 
In the following remark, conditions (ii) 
and (iii) are standard in ekNN analysis, 
e.g., \citep{xue2018achieving,qiao2019rates}.

\begin{remark}
\label{rem:eknnmain}
Theorem \ref{thm:main_eknn_rate} 
also applies when 
(i) a training set $S$ is sampled 
i.i.d. from the population, (ii) 
subsets $S_1, \ldots, S_m$ 
are drawn i.i.d. from $S$, 
and (iii) in each subset, points 
are sampled without replacement.
\end{remark}

\subsection{Other Results}

Theorem \ref{thm:main_eknn_rate}  also 
implies a probabilistic convergence rate. 

\begin{corollary}[Probabilistic]
\label{cor:main_eknn_rate_v3}
Suppose data distribution has $(\al, \be, \ga)$ 
Boltzmann margin. If  
$k = O(\tilde{n}^{\frac{\be + 2}{\be + C_\up}})$ 
and $m = O(\sqrt{\tilde{n}})$, then for 
any $\delta > 0$ and $\tilde{n} \geq 1$, 
\begin{equation*}
er(\hat{y}_e) - er(\hat{y}_*) \leq 
\tilde{C}_\al \delta^{-1} \exp(- 
C_\ga \tilde{n}^{\frac{\be}{\be + C_\up}}),  
\end{equation*}
with probability at least $1 - \delta$ 
over the random choice of $\tilde{S}$, 
where $\tilde{C}_\al$ is as defined 
in Theorem \ref{thm:main_eknn_rate}. 
\end{corollary}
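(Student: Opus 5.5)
The corollary follows from the second part of Theorem~\ref{thm:main_eknn_rate} by Markov's inequality, in the same way as Corollary~\ref{cor:main_knn_rate_v2} follows from Theorem~\ref{thm:main_knn_rate}.

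Define the nonnegative random variable
\begin{equation*}
W(\tilde{S}) := er(\hat{y}_e) - er(\hat{y}_*).
\end{equation*}
First we check that $W \geq 0$ for every realization of $\tilde{S}$. The Bayes classifier $\hat{y}_*$ minimizes $\Pr_{X,Y}\{\hat{y}(X)\neq Y\}$ pointwise in $x$ over all measurable classifiers. Since $\hat{y}_e$ is a fixed measurable classifier once $\tilde{S}$ is fixed, its error is at least the Bayes error. Ties have probability zero, so the conventions in the decision rules do not matter.

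Next, by the definition in (\ref{eq:deferror}) with $Z=\tilde{S}$, we have $\E_{\tilde{S}}[W] = ||\hat{y}_e - \hat{y}_*||_e$. Under the hypotheses $k = O(\tilde{n}^{\frac{\be + 2}{\be + C_\up}})$ and $m = O(\sqrt{\tilde{n}})$, the second claim of Theorem~\ref{thm:main_eknn_rate} gives, for all $\tilde{n}\ge 1$,
\begin{equation*}
\E_{\tilde{S}}[W] \leq \tilde{C}_\al \exp(-C_\ga \tilde{n}^{\frac{\be}{\be + C_\up}}).
\end{equation*}

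Now fix $\delta>0$. If $\delta \ge 1$ the claim is trivial, so assume $\delta\in(0,1)$. Markov's inequality gives
\begin{equation*}
\Pr_{\tilde{S}}\bigl\{W > \delta^{-1}\E[W]\bigr\} \le \delta.
\end{equation*}
Hence, with probability at least $1-\delta$,
\begin{equation*}
W \le \delta^{-1}\tilde{C}_\al \exp(-C_\ga \tilde{n}^{\frac{\be}{\be + C_\up}}).
\end{equation*}
If $\E[W]=0$, then $W=0$ almost surely and the bound holds trivially.

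The only point requiring care is that the bound must hold for \emph{all} $\tilde{n}\ge1$. This is why we need the second, non-asymptotic part of Theorem~\ref{thm:main_eknn_rate}, which uses $m=O(\sqrt{\tilde{n}})$ and the constant $\tilde{C}_\al$, rather than the first part, which only holds for $\tilde{n}$ beyond a threshold. No other obstacle arises.
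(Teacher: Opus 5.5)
Your proposal is correct and follows essentially the paper's argument: Markov's inequality applied to the nonnegative excess error $er(\hat{y}_e)-er(\hat{y}_*)$, with the expectation controlled by the non-asymptotic ($m=O(\sqrt{\tilde{n}})$) part of Theorem~\ref{thm:main_eknn_rate}. The only cosmetic difference is that the paper bounds that expectation through the disagreement probability $\Pr_{X,\tilde{S}}\{\hat{y}_e(X)\neq\hat{y}_*(X)\}$ (Lemma~\ref{lem:exer2geer} and Remark~\ref{rem:thm_eknn_tool}), whereas you invoke the theorem's bound on $||\hat{y}_e-\hat{y}_*||_e$ directly.
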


We can compare the above with the 
asymptotic rate for the bagged 1NN classifier 
in \citep{xue2018achieving}, which is 
$\tilde{O}\left(n^{-1}\ln (1/\delta)
\right)^{\frac{1}{2+d/\up}}$ if  
$\frac{\tilde{n}}{n} = 
\Omega(n^{-\frac{\up}{2\up+d}})$.  
Under the same choice of $\tilde{n}/n$, 
our bound becomes 
$O(e^{- C_\ga n^{\frac{1 + d}{C_\up} \cdot \frac{\be}{\be + C_\up}}})$. 
This rate is weaker than the asymptotic 
rate we established for kNN, 
but can be close when $d/\up$ is small or 
$\be$ is large -- in either case, our bound 
remains near-exponential and can be 
faster than the prior polynomial rate.

Our next result is a strong consistency 
guarantee for ekNN. Recall that a classifier 
$\hat{y}$ trained on a random sample $Z$ is 
strongly Bayes consistent      
if $\Pr_Z\{ \lim_{|Z| \to \infty} 
er(\hat{y}) = er(\hat{y}_*)\} = 1$ 
\citep{devroye2013probabilistic}. 

\begin{theorem}[Strong Consistency]
\label{thm:main_eknn_consistency}
Suppose data distribution has a 
$(\al, \be, \ga)$ Boltzmann margin. 
Then $\hat{y}_e$ is strongly 
Bayes consistent 
if $k = O(\tilde{n}^{\frac{\be + 2}{\be + C_\up}})$ 
and $m = \tilde{O}(e^{\tilde{n}})$. 
\end{theorem}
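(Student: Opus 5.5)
Strong consistency will follow from Corollary \ref{cor:main_eknn_rate_v3}, or directly from Theorem \ref{thm:main_eknn_rate}, combined with a Borel--Cantelli argument. Consider the sequence of training collections $\tilde{S}$ indexed by $\tilde{n}$, with $m = m(\tilde{n})$ and $k = k(\tilde{n})$. Set $\Delta_{\tilde{n}} := er(\hat{y}_e) - er(\hat{y}_*)$. This quantity is a nonnegative random variable in $\tilde{S}$, since $\hat{y}_*$ is Bayes optimal. By definition (\ref{eq:deferror}), $\E_{\tilde{S}}[\Delta_{\tilde{n}}] = ||\hat{y}_e - \hat{y}_*||_e$.

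First I check that the first part of Theorem \ref{thm:main_eknn_rate} applies for all large $\tilde{n}$. The condition $m = \tilde{O}(e^{\tilde{n}})$ gives $\ln m \le \tilde{n}^{\,\epsilon}$ in the appropriate sense. The precise reading I will adopt is that $\ln m = o(\tilde{n}^{\be/(\be+C_\up)})$, or at least that $2\ln m \le (C'/2)\,\tilde{n}^{\be/(\be+C_\up)}$ eventually. Under this reading, the requirement $\tilde{n} \ge (2\max\{\ln\tilde{n}, 2\ln m\}/C')^{1+C_\up/\be}$ holds for all $\tilde{n} \ge N_0$. The theorem then gives $\E[\Delta_{\tilde{n}}] \le C_\al \exp(-C_\ga \tilde{n}^{\be/(\be+C_\up)})$ for $\tilde{n} \ge N_0$.

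Next, fix $\epsilon > 0$ and apply Markov's inequality:
\begin{equation*}
\Pr\{\Delta_{\tilde{n}} > \epsilon\} \le \epsilon^{-1} C_\al \exp\left(-C_\ga \tilde{n}^{\frac{\be}{\be+C_\up}}\right).
\end{equation*}
Because $\be/(\be+C_\up) > 0$, the right-hand side is summable in $\tilde{n}$. Indeed, $\exp(-c\,\tilde{n}^{s})\le \tilde{n}^{-2}$ eventually for any $s>0$. By Borel--Cantelli, $\Delta_{\tilde{n}} > \epsilon$ happens only finitely often almost surely. Taking a countable union over $\epsilon = 1/j$ gives $\Delta_{\tilde{n}} \to 0$ almost surely, which is strong Bayes consistency. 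Since $n = m\tilde{n} \to \infty$ exactly when $\tilde{n} \to \infty$ (with $m \ge 1$), the limit in $|Z|$ agrees with the limit in $\tilde{n}$.

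The main obstacle is making the growth condition on $m$ compatible with the sample-size threshold in Theorem \ref{thm:main_eknn_rate}. Literally, $m$ of order $e^{\tilde{n}}$ makes $2\ln m \approx 2\tilde{n}$, which exceeds $C'\tilde{n}^{\be/(\be+C_\up)}$. So I will interpret $\tilde{O}$ as allowing $m$ only up to $\exp(c\,\tilde{n}^{\be/(\be+C_\up)})$ with $c$ small, and state this precisely. Within that regime there are two routes. The first is to invoke the theorem directly as above. The alternative is to redo the union bound over the $m$ base estimators inside the theorem's proof, where $\ln m$ enters. That would keep the Markov bound summable even if $\ln m$ is comparable to $C_\ga\tilde{n}^{\be/(\be+C_\up)}/2$. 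A minor further point is that the Borel--Cantelli step needs no independence across $\tilde{n}$, since it uses only summability of the probabilities.
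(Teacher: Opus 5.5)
Your proof is correct under the growth condition you state, and its core matches the paper's: bound the expected error by the rate of Theorem \ref{thm:main_eknn_rate} (the paper uses Remark \ref{rem:thm_eknn_tool}), apply Markov, and sum via Borel--Cantelli. The differences lie in two steps.

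\textbf{The Markov step.} The paper applies Markov to $Q(\tilde S)=\Pr_X\{\hat y_e(X)\neq\hat y_*(X)\}$ with a shrinking threshold $t_s=\exp(-\frac{C_\ga}{2}\tilde n^{\be/(\be+C_\up)})$. You apply it to the excess error with a fixed $\epsilon$ and then take a countable union over $\epsilon=1/j$. The $t_s$ device is unnecessary, since a fixed threshold already gives a summable series. Your countable-union step is what actually turns ``finitely often for each $\epsilon$'' into almost-sure convergence.

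\textbf{The treatment of $m$.} The paper reduces to $m=1$ by claiming the index sequence with growing $m$ is a subsequence of the one with $m=1$. That does not hold. The $m$-member ensemble is a different classifier from a single kNN, so its failure events are not a subsequence of the $m=1$ failure events. Moreover, the paper only ever checks the threshold of Remark \ref{rem:thm_eknn_tool}(ii) with $m=1$, i.e.\ $\tilde n\ge(2\ln\tilde n/C')^{1+C_\up/\be}$. You instead apply Theorem \ref{thm:main_eknn_rate} directly along the schedule $m(\tilde n)$. This is legitimate exactly when
\begin{equation*}
\ln m\le \frac{C'}{4}\,\tilde n^{\be/(\be+C_\up)}
\end{equation*}
holds eventually. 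That is the same restriction the threshold in Theorem \ref{thm:main_eknn_rate} and Remark \ref{rem:thm_eknn_tool}(ii) implicitly imposes.

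You are therefore right that the literal regime $m\asymp e^{\tilde n}$ is out of reach of these tools, and the paper's argument does not reach it either. Your version proves the statement in the regime the paper's machinery actually supports, and does so more soundly than the paper's reduction. One cleanup: state that explicit bound on $\ln m$ as your hypothesis. It does not follow from $m=\tilde O(e^{\tilde n})$, despite your opening sentence suggesting it does.
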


To the best of our knowledge, this is the 
first result that formally establishes strong 
consistency for ekNN classification.
We can compare its condition on $k$ 
with those for weak consistency, such as 
$O(n^{\frac{4}{4+D}})$ or 
$O(n^{\frac{\up(1+\be)}{2\up+D}})$ 
(Section \ref{sec:related_bageknn}). 
Our $O(\tilde{n}^{\frac{\be + 2}{\be + 2+ d/\up}})$ 
has similar implications: $k$ needs to 
scale with $\tilde{n}$, and a larger 
data dimension allows 
it to scale more slowly. 
A major difference is in the impact of $\be$: 
in prior conditions, an arbitrarily large $\be$ 
can require $k$ to scale arbitrarily fast 
alongside $n$, e.g., $k = O(n^5)$; in our condition, 
$k = O(\tilde{n})$ even when $\be \to \infty$.

\section{Numerical Results}
\label{sec:experiment}

We first construct a distribution 
that satisfies Boltzmann margin  
and sample data from it for 
empirical evaluation. 

Consider $\X = [0, 1]$ 
with posterior $\eta(X) = X$. 
The Bayes decision 
boundary is at $X = 0.5$. 
Figure~\ref{fig:eta_error} 
illustrates the posterior  function 
and the corresponding Bayes 
error under a uniform 
distribution (i.e., the 
area of the highlighted region).

To evaluate the impact of Boltzmann
margin, we construct a family of
distributions whose density decays
near the Bayes decision boundary 
and remains constant elsewhere. 
Given constants $\al, \be, \ga > 0$, 
define the density function 
\begin{equation}
\label{eq:density_blz_main}
f_b(x) = \begin{cases}
\frac{\al \be \ga}{2N_T} \cdot 
\frac{e^{-\frac{\ga}{|x-0.5|^{\be}}}}{
|x-0.5|^{\be + 1}}, 
& \text{if  } |x - 0.5| \leq T, \\[.75em]  
\frac{\al \be \ga}{2N_T} \cdot 
\frac{e^{-\ga T^{-\be}}}{T^{\be + 1}}, 
& \text{if  } |x - 0.5| > T. 
\end{cases}
\end{equation}
It can be shown that the distribution 
induced by $f_b$ satisfies 
$(\al, \be, \ga)$ Boltzmann margin. 
The detailed construction and 
verification are presented in Appendix \ref{sec:app_numerical_density}. 

\begin{figure}[t!]
    \centering
    \includegraphics[width=.8\linewidth]{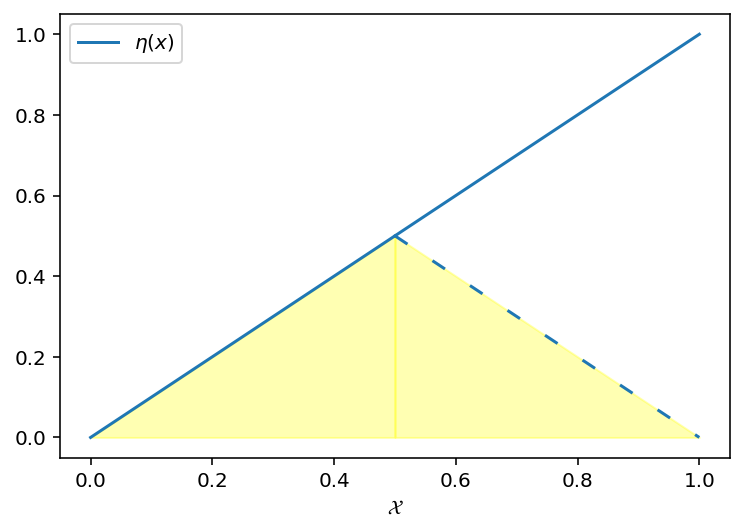}
    \vspace{-5pt}
    \caption{Posterior Function}
    \label{fig:eta_error}
\end{figure}

\begin{remark*}
In Figure \ref{fig:margin}, the 
Boltzmann-margin density is generated 
based on (\ref{eq:density_blz_main}) with 
$(T, \ga) = (0.15,10)$ and $\be = 0.1,0.2,0.8$ 
for Boltzmann curves 1, 2, 3 respectively.

Other densities are constructed in 
similar ways. For Massart margin, it is 
$f_t(x) = \left\{ \begin{matrix}
0, & |x - 0.5| \leq T\\ 
\frac{1}{1 - 2T}, & |x - 0.5| > T, 
\end{matrix}\right.$. 
For $(\al', \be')$ Tsybakov margin, 
it is 
\begin{equation*}
f_t(x) =\begin{cases}
\frac{\al' \be'}{2N'_T} 
\cdot |x - 0.5|^{\be' - 1}, 
& \text{if  } |x - 0.5| \leq T,
\\[.75em] 
\frac{\al' \be'}{2 N'_T} \cdot 
T^{\be' - 1}, 
& \text{if  } |x - 0.5| > T, 
\end{cases}     
\end{equation*}
where 
$N'_T = \al' T^{\be'} + 2 C'_T (0.5-T)$ 
and $C'_T = \frac{\al' \be'}{2}T^{\be' - 1}$.     
In the figure, $(\al', \be') = (1,1.5)$ 
and $T = 0.15$ for both. 
\end{remark*}

After constructing Boltzmann-margin distribution, 
we generate data points from it by applying 
the rejection sampling technique 
\cite{bishop2006pattern}. 
The detailed sampling procedure is described in 
Appendix \ref{sec:app_numerical_sample}.

\begin{figure}[t!]
    \centering
    \includegraphics[width=.86\linewidth]{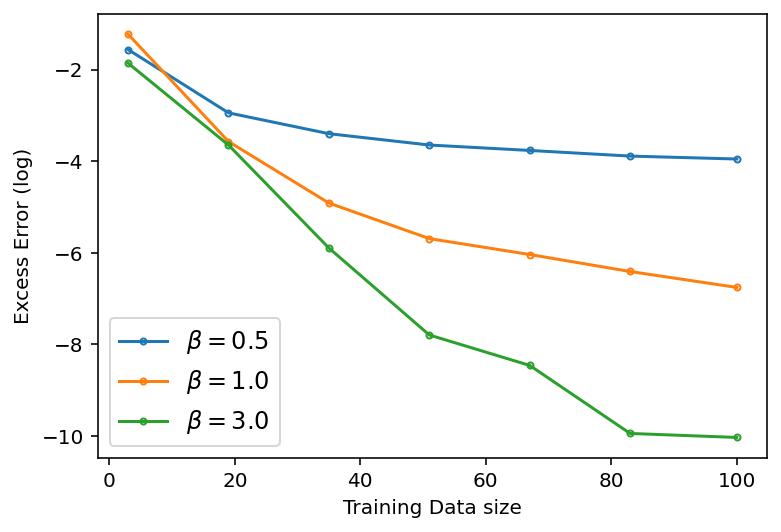}
    \caption{kNN Classification Performance}
    \label{fig:Boltzmann_knn}
\end{figure}
\begin{figure}[t!]
    \centering
    \includegraphics[width=.86\linewidth]{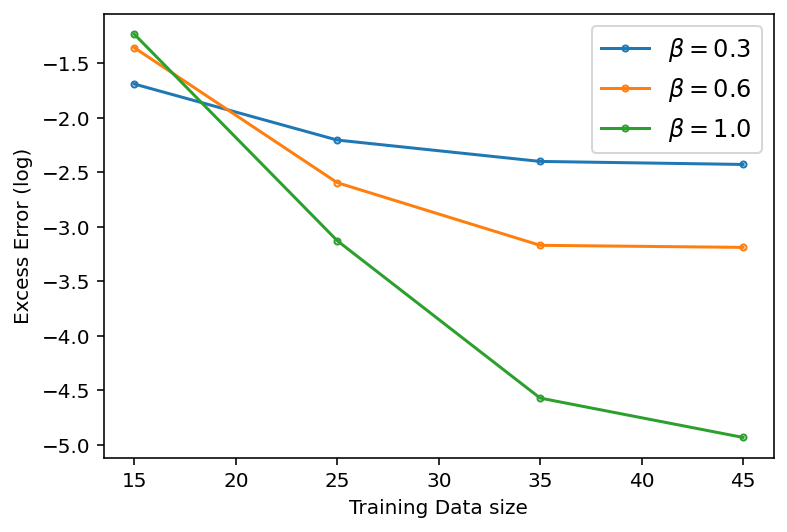}
   \caption{ekNN Classification Performance}
    \label{fig:Boltzmann_eknn}
\end{figure}
\begin{figure}[t!]
    \centering
    \includegraphics[width=.86\linewidth]{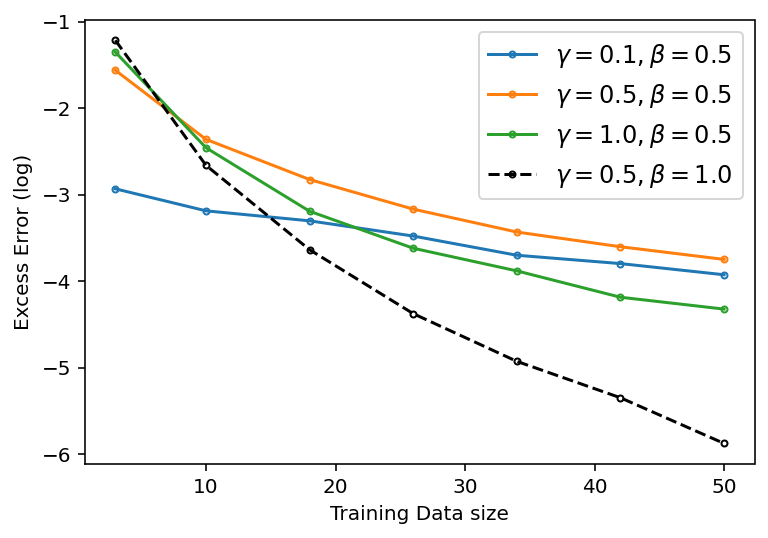}
    \caption{kNN Performance based on $\ga$'s}
    \label{fig:Boltzmann_gamma}
\end{figure}

\subsection{Classification Performance}

For empirical evaluation, 1000 
points are generated i.i.d. with $T = 0.15$ and 
$\al = 1$. We randomly shuffle them, 
use the first 25\% for testing, 
and sample from the rest for training 
(to simulate varying training data sizes). 
For ekNN, the training data sampling 
procedure follows the conditions 
in Remark \ref{rem:eknnmain}. 
All results are averaged over 
1000 shuffles.  

We evaluate excess error on the test 
set and estimate the Bayes error using 
the full data pool. For kNN, we set 
$k = C_k n^{\frac{\be + 2}{\be + C_{\up}}}$
according to Theorem \ref{thm:main_knn_rate}, 
where $C_\up = 2 + d/\up = 3$ 
and $C_k$ is chosen so that 
$k = 3$ when $n = 5$. For ekNN, we set 
$m = 5$ and $k = \tilde{C}_k \tilde{n}^{\frac{\be 
+ 2}{\be + C_{\up}}}$ according to 
Theorem \ref{thm:main_eknn_rate}, 
where $\tilde{C}_k$ is chosen in the 
same way. All $k$ values are rounded 
up to the nearest odd integers.

Figure \ref{fig:Boltzmann_knn} shows 
the excess error of kNN classifier 
with $k= 3$, $\ga = 0.5$ and various 
values of $\be$.  
The y-axis is plotted on a logarithmic 
scale, so straighter curves indicate 
rates closer to exponential. 
As $\be$ increases, the excess error 
curve becomes increasingly straight, 
which agrees with our theoretical result 
that larger $\be$ implies convergence 
closer to exponential. 

Figure \ref{fig:Boltzmann_eknn} 
shows the performance of ensemble 
kNN classifier with $k = 3$, 
$m = 5$, $\ga = 0.5$ and various 
values of $\be$. 
As $\be$ increases, the excess error  
curve becomes increasingly straight, 
suggesting convergence rates closer 
to exponential. 

Another interesting observation 
arises from comparing 
Figures \ref{fig:Boltzmann_knn} 
and \ref{fig:Boltzmann_eknn}.
The ekNN error curves appear 
straighter than those of kNN, 
suggesting a potential advantage 
of ekNN. However, neither our 
present theory nor existing ekNN 
convergence rates (Table \ref{tab:summary_rate})   
explain this gap, for they largely 
inherit the rates of kNN. Understanding 
this gap remains an interesting 
direction for future research. 

Finally, we evaluate the impact of $\ga$. 
Figure \ref{fig:Boltzmann_gamma} shows 
the kNN excess error based on $k = 3$ 
and $\be = 0.5$. As $\ga$ increases, 
the error decreases more rapidly, but 
the error curves do not become substantially 
straighter, especially when compared 
with the dashed curve obtained by increasing 
$\be$. This agrees with our theoretical 
implications: $\ga$ can accelerate 
convergence, but unlike $\be$, it 
does not make the convergence rate 
closer to exponential. This explains 
why the dashed error curve is noticeably 
straighter.

\section{Conclusion and Discussion}
\label{sec:conclusion}

This paper introduces Boltzmann margin, 
a new margin condition 
that enables near-exponential 
convergence rates for classification. 
We demonstrate its application 
to kNN and ekNN classifiers and establish 
their first near-exponential rates, 
as well as the first strong 
consistency guarantee for ekNN. 
Extending Boltzmann margin to broader 
hypothesis classes remains an 
important direction for future research.  

Besides its theoretical contributions, 
the present study also has practical 
implications. 
For example, Lemma \ref{lem:relation_tsybakov} 
suggests that, for distributions satisfying 
a Tsybakov margin, one may expect faster 
error decay in the small-data regime 
than that implied by the margin alone. 
Furthermore, improving error rates
may not require finding a feature space 
(e.g., through data embedding) in which 
a strong Massart margin holds. Such a 
representation can be difficult to obtain 
and may involve substantial tradeoffs. Instead, 
one may identify a feature space satisfying 
Boltzmann margin while still achieving 
near-exponential error convergence. 

\section{Acknowledgment}

We thank all anonymous reviewers for 
their thorough and constructive feedback 
that improves the quality of this work.

\addtocontents{toc}{\protect\setcounter{tocdepth}{2}}

\bibliographystyle{plainnat}
\renewcommand{\bibsection}{\subsubsection*{References}}
\bibliography{reference}

\newpage

\onecolumn

\title{Near-Exponential Convergence Rates 
for kNN Classification\\ 
based on Boltzmann Margin\\ 
(Supplementary Material)}

\maketitle

\appendix

\tableofcontents

\newpage  

\section{Basic Results and Tools}

\subsection{Relations between Margins 
(Lemmas \ref{lem:relation_massart}, 
\ref{lem:relation_tsybakov}, \ref{lem:relation3})}

\textbf{Lemma \ref{lem:relation_massart}}. 
{\it A $t_*$ Massart margin implies a 
$(\al, \be, \ga)$ Boltzmann 
margin for $\al, \be, \ga > 0$ 
satisfying $\al \exp(-\frac{\ga}{t_*^\be}) = 1$. 
Conversely, for any $t_* > 0$, 
a $(1, \be, t_*^{\be})$ Boltzmann margin 
approaches a $t_*$ Massart margin 
condition as $\be \to \infty$, provided 
that $|1/2 - \eta(X)| \neq t_*$ 
almost everywhere. }

\begin{proof}
Suppose data distribution has a $t_*$ Massart margin 
and $\al, \be, \ga$ satisfies $\al e^{-\ga t_*^{-\be}} = 1$. 
For any $t \geq t_*$, 
\begin{equation}
\label{eq:prf_relation_01}
\Pr\{ |1/2 - \eta(X)|\} 
\leq t \} \leq 1 
= \al e^{-\ga t_*^{-\be}} 
\leq \al e^{-\ga t^{-\be}}. 
\end{equation}
When $t < t_*$, the above inequality 
also holds because the left side is 
zero due to Massart margin. This proves 
the first claim. 

Reversely, for any $t_* > 0$, 
suppose the distribution has a 
$(\al, \be, t_*^{\be})$ 
Boltzmann margin. Then, for any $t > 0$:  
\begin{equation}
\Pr \{ |1/2-\eta(X)| \leq t \} \leq 
\al \exp(- t_*^{\be} t^{-\be}) 
= \al \exp\left(-(t_*/t)^{\be}\right) := \Delta. 
\end{equation}
Consider three cases for $t$. 

-- If $t < t_*$, then $t_*/t > 1$ so 
that $\Delta \xrightarrow{\be \to \infty} 0$. 

-- If $t > t_*$, then $t_*/t < 1$ so 
that $\Delta \xrightarrow{\be \to \infty} \al$.

-- Case $t = t_*$ happens with probability 
zero by assumption. 

Putting all together, we have 
\begin{equation}
\lim_{\be \to \infty} 
\Pr \{ |1/2- \eta(X)| \leq t \} \leq 
\left\{ \begin{matrix}
0 & \text{if } t \leq t_*\\[.5em] 
\al & \text{if } t > t_* 
\end{matrix} \right.
\end{equation}
This proves the Boltzmann margin 
converges to a $t_*$ Massart margin. 
\end{proof}

{\bf Lemma \ref{lem:relation_tsybakov}}. 
{\it A $(\al, \be, \ga)$ Boltzmann margin 
implies a $(\frac{\al}{\ga}, \be)$ Tsybakov margin. 
Reversely, a $(\al, \be)$ Tsybakov margin 
satisfies the $(\al', \be',\ga')$ 
Boltzmann margin property (\ref{eq:boltzmann}) 
for $t \in [(\frac{\ga'}{\ln (\al'/\al)})^{1/\be'}, 
0.5]$, if $\al', \be' > 0$ and $\ga' \leq \ln (\al'/\al)$.  }

\begin{proof}
Suppose a distribution has $(\al, \be, \ga)$ 
Boltzmann margin. Then for any $t > 0$, 
\begin{equation}
\Pr\{ |1/2 - \eta(x)|\} 
\leq t \} \leq \al e^{-\ga t^{-\be}} 
\leq \al \frac{1}{\ga t^{-\be}} 
= \frac{\al}{\ga} t^{\be}, 
\end{equation}
which implies $(\al/\ga, \be)$ Tsybakov margin. 
This proves the first claim. 

Reversely, suppose a distribution has 
$(\al, \be)$ Tsybakov margin.
Let $\al', \be', \ga' > 0$ be 
any constants satisfying 
$\ga' \leq \ln (\al'/\al)$. 
Then, one can verify that 
$(\frac{\ga'}{\ln (\al'/\al)}
)^{1/\be'} \leq t \leq 0.5$ 
implies 
\begin{equation}
\al t^{\be} \leq \al \leq 
\al' \exp(-\ga' t^{-\be'}),  
\end{equation}
where the first inequality follows 
$t^\be \leq 1$ and the second 
follows the lower bound of $t$. 
Hence the $(\al', \be',\ga')$ 
Boltzmann margin. Note that $\be$ 
is not included in the condition 
on $t$ because we set $t \leq 0.5$, 
in which case $t^{\be} \leq 1$ 
for any $\be > 0$. 
\end{proof}

{\bf Lemma \ref{lem:relation3}}. 
{\it A $(\al, \be, \ga)$ Boltzmann margin implies 
a  $(\al', \be', \ga')$ Boltzmann margin whenever 
(i) $\al' \geq \al$, (ii) $\be' \geq \be$ or (iii) $\ga' \leq \ga$, with the remaining parameters fixed.  }

\begin{proof}
This property follows the definition. 
Recall $(\al, \be,\ga)$ Boltzmann margin means 
$\Pr\{ |1/2 - \eta(x)|\} 
\leq t \} \leq \al e^{-\ga t^{-\be}}$. 
Apparently, the right side increases 
if $\al$ or $\be$ increases, or $\ga$ decreases. 
\end{proof}

\subsection{Relation between 
Excess Error and Error Probability}

In this section, we present some 
useful relations between excess error 
and error probability, which will be 
applied later. 

\begin{lemma}
\label{lem:exer2geer}
Let $\hat{y}$ be any classifier 
trained on sample $S$. For any fixed 
$S$, there is 
\begin{equation}
er(\hat{y}) - er(\hat{y}_*) \leq 
{\Pr}_{X} \{ \hat{y}(X) \neq \hat{y}_*(X) \}.
\end{equation}
Further, over the random choice 
of $S$, there is 
\begin{equation}
\E_S [er(\hat{y})] 
- er(\hat{y}_*) \leq {\Pr}_{X, S} 
\{ \hat{y}(X) \neq \hat{y}_*(X) \}; 
\end{equation}
and for any $t > 0$, 
\begin{equation}
{\Pr}_S \{ er(\hat{y})  
- er(\hat{y}_*) > t\} 
\leq \frac{1}{t} {\Pr}_{X, S}\{ 
\hat{y}(X) \neq \hat{y}_*(X) \}.   
\end{equation}
\end{lemma}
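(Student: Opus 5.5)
The plan is to start from the standard pointwise excess-risk identity for binary classification. For a fixed trained classifier $\hat{y}$, i.e. a fixed sample $S$, conditioning on $X=x$ gives
\begin{equation*}
\Pr_Y\{\hat{y}(x)\neq Y \mid X=x\} - \Pr_Y\{\hat{y}_*(x)\neq Y \mid X=x\} = |2\eta(x)-1| \cdot \I\{\hat{y}(x)\neq \hat{y}_*(x)\}.
\end{equation*}
This is checked by cases. If $\hat{y}(x)=\hat{y}_*(x)$, both sides are zero. Otherwise the two conditional errors are $\eta(x)$ and $1-\eta(x)$ in some order, and the Bayes rule picks the smaller one, so the difference equals $|2\eta(x)-1|$. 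Taking $\E_X$ gives
\begin{equation*}
er(\hat{y})-er(\hat{y}_*)=\E_X\bigl[|2\eta(X)-1|\,\I\{\hat{y}(X)\neq\hat{y}_*(X)\}\bigr].
\end{equation*}
Since $\eta\in[0,1]$, we have $|2\eta(X)-1|\leq 1$. Bounding this factor by one yields the first inequality.

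For the second claim, I take $\E_S$ of both sides of the first inequality; $er(\hat{y}_*)$ does not depend on $S$. Because $S$ and the test point $X$ are independent, Fubini/Tonelli turns $\E_S\Pr_X\{\hat{y}(X)\neq\hat{y}_*(X)\}$ into the joint probability $\Pr_{X,S}\{\hat{y}(X)\neq\hat{y}_*(X)\}$. Joint measurability of the event in $(X,S)$ holds for kNN, and I will treat it as a standing assumption for general $\hat{y}$.

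For the third claim, I use that the excess error $er(\hat{y})-er(\hat{y}_*)$ is nonnegative as a random variable in $S$, since the Bayes classifier is optimal. Markov's inequality then gives
\begin{equation*}
\Pr_S\{er(\hat{y})-er(\hat{y}_*)>t\}\leq t^{-1}\,\E_S[er(\hat{y})-er(\hat{y}_*)],
\end{equation*}
and the second claim bounds the right side by $t^{-1}\Pr_{X,S}\{\hat{y}(X)\neq\hat{y}_*(X)\}$.

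No step is a real obstacle. The only points needing care are the tie convention $\eta(x)=1/2$, which the paper assumes has probability zero and which contributes a zero factor anyway, and the measurability/independence needed for Fubini.
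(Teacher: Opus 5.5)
Your proof is correct, and your handling of the second and third claims (take $\E_S$ using independence and Fubini, then apply Markov using nonnegativity of the excess error) matches the paper's. The difference is in the first claim.

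The paper never computes the excess risk. It uses the event inclusion $\{\hat{y}(X)\neq Y\}\subseteq\{\hat{y}(X)\neq\hat{y}_*(X)\}\cup\{\hat{y}_*(X)\neq Y\}$, applies a union bound, and rearranges. That argument is shorter and never uses the form of $\hat{y}_*$ or the fact that labels are binary. So it holds with any reference classifier in place of $\hat{y}_*$, and in multiclass settings.

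Your route goes through the exact identity $er(\hat{y})-er(\hat{y}_*)=\E_X[|2\eta(X)-1|\,\I\{\hat{y}(X)\neq\hat{y}_*(X)\}]$. This is specific to the binary Bayes rule, but it buys two things.

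\begin{itemize}
\item It shows exactly what slack is discarded: the factor $|2\eta(X)-1|$. Classical Tsybakov-margin analyses keep this factor to obtain the extra power of $t$, giving the exponent $1+\be$ rather than $\be$. Under a Boltzmann margin that extra factor is immaterial, so the paper loses nothing essential by dropping it.
\item It certifies $er(\hat{y})-er(\hat{y}_*)\geq 0$, which the Markov step in the third claim requires. The paper simply asserts this, and the union-bound argument does not supply it. You could cite your own identity at that step instead of appealing separately to Bayes optimality.
\end{itemize}
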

\begin{proof}
Let $Y$ be the label of $X$. 
Since $\hat{y}(X) \neq Y$ 
implies $\hat{y}(X) \neq \hat{y}_*(X)$ 
or $\hat{y}_*(X) \neq Y$, there is
\begin{equation}
{\Pr}_{X,Y}\{ \hat{y}(X) \neq Y \} \leq 
{\Pr}_{X} \{ \hat{y}(X) \neq \hat{y}_*(X) \}
+ {\Pr}_{X,Y}\{ \hat{y}_*(X) \neq Y\},   
\end{equation}
which implies 
\begin{equation}
\label{eq:lem_basic_prf_01}
{\Pr}_{X}\{ \hat{y}(X) \neq \hat{y}_*(X) \} 
\geq {\Pr}_{X,Y}\{ \hat{y}(X) \neq Y \} 
- {\Pr}_{X,Y} \{ \hat{y}_*(X) \neq Y \} 
= er(\hat{y}) - er(\hat{y}_*). 
\end{equation}
This proves the first claim. The second 
claim follows by taking expectation w.r.t. 
$S$ on both sides of (\ref{eq:lem_basic_prf_01}), 
and the third claim follows by further 
employing the Markov inequality (based on 
the fact that $er(\hat{y}) - 
er(\hat{y}_*) \geq 0$). 
\end{proof}

The above lemma shows 
${\Pr}_{X}\{ \hat{y}(X) \neq \hat{y}_*(X)\}$ 
is key for bounding excess error. 
The following lemma provides an 
observation that is useful for bounding 
${\Pr}_{X}\{ \hat{y}(X) \neq \hat{y}_*(X)\}$. 

\begin{lemma}
\label{lem:cond4geer}    
Let $\hat{y}$ be a classifier 
defined as $\hat{y}(x) = \I\{\hat{y}(x; S) 
\geq 1/2\}, \forall x \in \X$. 
Then, almost surely, any $x \in X$ 
satisfying $|\hat{\eta}(x; S) - \eta(x)| 
\leq |1/2 - \eta(x)|$ also satisfies 
$\hat{y}(x) = \hat{y}_*(x)$. 
\end{lemma}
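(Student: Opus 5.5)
The plan is a direct case analysis on the sign of $\eta(x) - 1/2$, using only the definitions of $\hat{y}$ and $\hat{y}_*$. Here $\hat{y}(x;S)$ in the statement is read as the estimator $\hat{\eta}(x;S)$. The Bayes classifier is $\hat{y}_*(x) = \I\{\eta(x) \geq 1/2\}$, so $\hat{y}(x) = \hat{y}_*(x)$ holds exactly when $\hat{\eta}(x;S)$ and $\eta(x)$ lie on the same side of $1/2$. The event we discard will be $\{\eta(x) = 1/2\}$ together with the boundary event $\{\hat{\eta}(x;S) = 1/2\}$. Both have probability zero by the standing no-ties assumption stated before Theorem \ref{thm:main_knn_rate}, and this is where the qualifier ``almost surely'' comes from.

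Fix $x$ with $|\hat{\eta}(x;S) - \eta(x)| \leq |1/2 - \eta(x)|$. First I will treat the case $\eta(x) > 1/2$, so that $\hat{y}_*(x) = 1$. The hypothesis gives
\[
\hat{\eta}(x;S) \geq \eta(x) - |\hat{\eta}(x;S) - \eta(x)| \geq \eta(x) - (\eta(x) - 1/2) = 1/2 ,
\]
hence $\hat{y}(x) = \I\{\hat{\eta}(x;S) \geq 1/2\} = 1 = \hat{y}_*(x)$. Next I will treat the case $\eta(x) < 1/2$, so that $\hat{y}_*(x) = 0$. Symmetrically,
\[
\hat{\eta}(x;S) \leq \eta(x) + (1/2 - \eta(x)) = 1/2 .
\]
Equality $\hat{\eta}(x;S) = 1/2$ would force $\hat{\eta}(x;S) - \eta(x) = 1/2 - \eta(x)$ exactly, which is a tie in the decision rule. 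Excluding that null event gives $\hat{\eta}(x;S) < 1/2$, so $\hat{y}(x) = 0 = \hat{y}_*(x)$. Finally, the case $\eta(x) = 1/2$ forces $\hat{\eta}(x;S) = 1/2$ and is again a null event by assumption.

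The only delicate point is this asymmetry of the threshold at $1/2$: equality is harmless when $\eta(x) > 1/2$ but matters when $\eta(x) < 1/2$. I will handle it exactly as the paper's tie assumption permits, by restricting to the complement of the probability-zero set on which $\hat{\eta}(x;S) = 1/2$ or $\eta(x) = 1/2$. The rest is elementary.
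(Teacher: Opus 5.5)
Your argument is correct and is essentially the paper's proof made explicit: the paper just observes that $|\hat{\eta}(x;S)-\eta(x)|\le|1/2-\eta(x)|$ places $\hat{\eta}(x;S)$ and $\eta(x)$ on the same side of $1/2$. It silently skips the boundary case $\eta(x)<1/2=\hat{\eta}(x;S)$, which you correctly isolate as the only place where a tie assumption is needed.

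Two minor refinements. First, the case $\eta(x)=1/2$ need not be discarded, since it forces $\hat{\eta}(x;S)=1/2$ and then both classifiers output $1$. Second, $\{\hat{\eta}(x;S)=1/2\}$ is not a distributional null event, since for even $k$ it typically has positive probability. It is really excluded by taking $k$ odd (and $mk$ odd for the ensemble estimator), which makes $\hat{\eta}(x;S)=1/2$ impossible outright.
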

\begin{proof}
It is clear that $\hat{y}(x) = \hat{y}_*(x)$ 
if $\hat{\eta}(x; S)$ and $\eta(x)$ fall
on the same side of 1/2, which is guaranteed 
if $|\hat{\eta}(x; S) - \eta(x)| 
\leq |1/2 - \eta(x)|$. This proves the lemma. 
\end{proof}

\subsection{Other Tools}

We borrow the following kNN regression bound 
from prior work. The original 
bound is uniform and very strong, 
but we only need its implied pointwise 
bound. The following is a paraphrase of 
\cite[Proposition 1]{xue2018achieving}. 

\begin{proposition} 
\label{lem:xue2}
Let $\V$ be the VC dimension of the class 
of balls on $\X$. 
Let $\delta < 1$ and 
$k = O((\ln \frac{2|Z|}{\delta})^{
\frac{d}{2 \up + d}} (C_d |Z|)^{\frac{2\up}{2 
\up + d}})$. With probability at least $1 - 
2 \delta$ over the random sampling of $Z$, 
we have simultaneously for all $x \in \X$: 
\begin{equation}
\label{eq:lem:xue2_01}
| \hat{\eta}(x; Z) - \eta(x) | 
\leq C \left( \frac{\V \ln(2|Z|/\delta)}{
C_d |Z|}\right)^{\frac{\up}{2\up+d}},   
\end{equation}
where $C$ is a function of $\up, \la$, 
and $C_d$ is a constant depending on intrinsic 
data dimension $d$. 
\end{proposition}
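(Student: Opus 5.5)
We would prove this by a bias--variance decomposition of the kNN estimator, made uniform in $x$ through VC arguments for the class of balls; this is the route of \citep{xue2018achieving}. Write $Z = \{(x_i,y_i)\}_{i=1}^{|Z|}$ and $n := |Z|$. Let $r_k(x)$ be the distance from $x$ to its $k$th nearest neighbor in $Z$. Define the conditional mean
\begin{equation*}
\tilde{\eta}(x) = \frac{1}{k}\sum_{x_i \in N_k(x;Z)} \eta(x_i).
\end{equation*}
Then
\begin{equation*}
|\hat{\eta}(x;Z) - \eta(x)| \leq |\hat{\eta}(x;Z) - \tilde{\eta}(x)| + |\tilde{\eta}(x) - \eta(x)|,
\end{equation*}
and we bound the two terms separately. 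Each bound holds with probability at least $1-\delta$, which accounts for the $1-2\delta$ in the statement.

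\textbf{Bias term.} By the H\"older assumption, $|\tilde{\eta}(x) - \eta(x)| \leq \la\, r_k(x)^{\up}$, so it suffices to control $r_k(x)$ uniformly in $x$. We would invoke the relative (multiplicative) uniform deviation bound for a VC class of dimension $\V$. It gives that, with probability at least $1-\delta$, every ball $B$ with $\Pr(B) \geq c\,(k + \V\ln(2n/\delta))/n$ contains at least $k$ sample points. By the intrinsic-dimension property, $\Pr(B(x,r)) \geq \min(C_d r^d, 1)$. Hence, simultaneously for all $x$,
\begin{equation*}
r_k(x) \leq \left(\frac{c\,(k + \V\ln(2n/\delta))}{C_d n}\right)^{1/d},
\end{equation*}
and the bias is at most $\la$ times the $\up$-th power of this quantity.

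\textbf{Variance term.} Condition on $x_1,\dots,x_n$. The labels are then independent with means $\eta(x_i)$. For a fixed $x$, $\hat{\eta}(x;Z) - \tilde{\eta}(x)$ is an average of $k$ centered bounded variables, so Hoeffding gives a deviation of order $\sqrt{\ln(1/\delta')/k}$. To make this uniform in $x$, we use that, since ties have probability zero, $N_k(x;Z)$ is the trace of a ball on the sample. By Sauer's lemma there are at most $(en/\V)^{\V}$ distinct such subsets. A union bound with $\delta' = \delta/(en/\V)^{\V}$ then gives, with probability at least $1-\delta$,
\begin{equation*}
\sup_x |\hat{\eta}(x;Z) - \tilde{\eta}(x)| \leq C\sqrt{\V\ln(2n/\delta)/k}.
\end{equation*}
I expect this uniformization to be the main obstacle. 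One must check that the neighbor sets really are ball traces, that conditioning on the $x_i$ is compatible with the event used in the bias step, and that the $\V$ factor enters only logarithmically. The fallback is to apply the counting argument directly to the finite family of realizable neighbor sets.

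\textbf{Balancing.} Choose $k$ to equate $\la\,(k/(C_d n))^{\up/d}$ with $\sqrt{\V\ln(2n/\delta)/k}$. This gives
\begin{equation*}
k \asymp (\ln(2n/\delta))^{\frac{d}{2\up+d}}\,(C_d n)^{\frac{2\up}{2\up+d}},
\end{equation*}
which is the stated choice. For this $k$, the term $k$ dominates $\V\ln(2n/\delta)$ once $n$ is large; otherwise that term is absorbed into the constant. Both error terms then become $C\,(\V\ln(2n/\delta)/(C_d n))^{\up/(2\up+d)}$. A union bound over the two events yields (\ref{eq:lem:xue2_01}) with probability at least $1-2\delta$.
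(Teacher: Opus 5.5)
Your proposal is correct and follows essentially the same route as the result being quoted. The paper gives no proof of its own and only paraphrases \citep[Proposition 1]{xue2018achieving}, whose argument is exactly this bias--variance split: a relative VC bound on balls to control $r_k(x)$, then Hoeffding with a Sauer-lemma union bound over ball traces. The uniformization you flag as the main obstacle is unproblematic, because once you condition on the $x_i$ the neighbor sets are fixed and the labels are independent. One caveat on your balancing step: it needs $k$ of exactly that order ($k \asymp$, not merely $k = O(\cdot)$ as the statement literally says), since a too-small $k$ inflates the variance term.
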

\begin{corollary} 
\label{cor:lem_xue2}
Let $C_\up = 2 + \frac{d}{\up}$ 
and $C' = \frac{C_d}{\V C^{C_\up}}$. 
Proposition \ref{lem:xue2} implies 
for any $t > 0$, if $k = O(t^{\frac{d}{\up}} |Z| 
C_d^{\frac{2 \up + 2d}{2\up + d}})$ then  
\begin{equation}
\label{eq:cor:xue2_01}
{\Pr}_{X, Z} \{| \hat{\eta}(X; Z) - 
\eta(X) | > t\} \leq 2 |Z| 
\exp\left(- C' |Z| t^{C_\up}\right).  
\end{equation}
\end{corollary}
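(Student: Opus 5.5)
The plan is to read off a tail bound directly from Proposition \ref{lem:xue2}, which holds uniformly over $x$ with probability at least $1-2\delta$ over $Z$, by choosing $\delta$ as a function of $t$. We set $n=|Z|$ and write the right side of (\ref{eq:lem:xue2_01}) as a threshold $t$. Then we solve for $\delta$: the equation
\begin{equation*}
t = C\left(\frac{\V \ln(2n/\delta)}{C_d n}\right)^{\frac{\up}{2\up+d}}
\end{equation*}
is equivalent to $\ln(2n/\delta) = \frac{C_d n}{\V} (t/C)^{\frac{2\up+d}{\up}}$. This gives
\begin{equation*}
\delta = 2n\exp\left(-\frac{C_d}{\V C^{C_\up}}\, n\, t^{C_\up}\right) = 2n \exp(-C' n t^{C_\up}),
\end{equation*}
using $\frac{2\up+d}{\up} = C_\up$ and the stated $C'$.

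With this $\delta$, the event $\{\sup_x |\hat\eta(x;Z)-\eta(x)| > t\}$ has $Z$-probability at most $2\delta$. Since the bound is uniform in $x$, for an independent test point $X$ we get
\begin{equation*}
{\Pr}_{X,Z}\{|\hat\eta(X;Z)-\eta(X)|>t\} \le {\Pr}_Z\{\sup_x |\hat\eta(x;Z)-\eta(x)| > t\},
\end{equation*}
which is at most $2\delta$. The factor of $2$ will either be absorbed into the constants or avoided by applying the proposition with $\delta/2$; this only changes $\ln(2n/\delta)$ to $\ln(4n/\delta)$, which can be absorbed into $C'$. The case $\delta\ge 1$ needs no argument, since the bound then exceeds $1$ and is trivial.

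Next we check that the proposition's requirement on $k$ translates into the stated condition. With the chosen $\delta$, $\ln(2n/\delta) = C' n t^{C_\up}$. Substituting into $k = O\big((\ln\frac{2n}{\delta})^{\frac{d}{2\up+d}} (C_d n)^{\frac{2\up}{2\up+d}}\big)$ gives
\begin{equation*}
k = O\left( (C' n)^{\frac{d}{2\up+d}}\, t^{C_\up \frac{d}{2\up+d}}\, (C_d n)^{\frac{2\up}{2\up+d}} \right).
\end{equation*}
Since $C_\up\frac{d}{2\up+d} = \frac{d}{\up}$, this is $O\big(t^{d/\up}\, n\, C_d^{\frac{2\up}{2\up+d}} (C')^{\frac{d}{2\up+d}}\big)$. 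Because $C'$ contains a factor $C_d$, the power of $C_d$ becomes $\frac{2\up+d}{2\up+d}$ times other constants. Matching it to the stated $C_d^{\frac{2\up+2d}{2\up+d}}$ requires tracking $\V$ and $C$ inside the $O(\cdot)$.

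The main obstacle is this bookkeeping. The issue is that the proposition's $k$-condition depends on $\delta$ while we want a condition depending only on $t$. One must also make sure the substitution is legitimate, i.e. that the chosen $\delta<1$, which holds exactly when the tail bound is nontrivial. Everything else is a direct rewriting of Proposition \ref{lem:xue2}.
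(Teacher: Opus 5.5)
Your proposal is correct and follows the paper's own route: set the right side of (\ref{eq:lem:xue2_01}) equal to $t$, solve for $\delta = 2|Z|\exp(-C'|Z|t^{C_\up})$, and pass from the high-probability bound over $Z$ to the joint probability over $(X,Z)$. You are in fact more careful than the paper, whose proof silently drops the factor $2$ in the failure probability $2\delta$ and never checks the $k$-condition or $\delta<1$. Your repairs are sound. The factor of $2$ can be absorbed by halving $C'$, since $\ln(4n/\delta)\le 2\ln(2n/\delta)$ for $\delta<1\le n$. The mismatch in the power of $C_d$ is a constant factor and disappears inside the $O(\cdot)$.
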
 
\begin{proof}
Proposition \ref{lem:xue2} implies 
for any $x \in X$, (\ref{eq:lem:xue2_01}) 
holds with high probability. 
Setting the right side of (\ref{eq:lem:xue2_01})
to $t$ implies ${\Pr}_{Z} \{| \hat{\eta}(x; Z) - 
\eta(x) | > t\} \leq 2 |Z| 
\exp\left(- \frac{C_d}{\V} |Z| 
\left(\frac{t}{C}\right)^{C_\up}\right)$. 
Taking expectation w.r.t. $x$ on both sides 
gives the corollary. 
\end{proof}

We will also apply the classic Borel-Cantelli 
Lemma to prove storng consistency. It is quoted 
as follows. 
\begin{lemma}
\label{lem:borelcantelli}
Let $E_1, E_2, \ldots$ be a sequence of 
events. If $\sum_{i=1}^\infty \Pr\{E_i\} < \infty$, 
then $\Pr\{{\lim\sup}_{i \to \infty} E_i\} = 0$. 
\end{lemma}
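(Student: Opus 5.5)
The statement to be proved is the Borel--Cantelli lemma (Lemma \ref{lem:borelcantelli}), quoted as a classical tool. The plan is the standard first-moment argument via countable subadditivity.

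First, write the limit superior as
\begin{equation*}
\limsup_{i\to\infty} E_i \;=\; \bigcap_{N=1}^\infty \bigcup_{i \geq N} E_i .
\end{equation*}
This is the event that infinitely many $E_i$ occur. For every fixed $N$, it is contained in the tail union $\bigcup_{i\ge N}E_i$. By monotonicity of probability,
\begin{equation*}
\Pr\{\limsup_i E_i\} \;\le\; \Pr\Bigl\{\bigcup_{i\ge N}E_i\Bigr\}.
\end{equation*}

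Second, apply countable subadditivity (the union bound) to the tail union:
\begin{equation*}
\Pr\Bigl\{\bigcup_{i\ge N}E_i\Bigr\} \;\le\; \sum_{i\ge N}\Pr\{E_i\}.
\end{equation*}
The right side is the tail of a convergent series, since $\sum_{i=1}^\infty \Pr\{E_i\}<\infty$ by hypothesis. Hence it tends to $0$ as $N\to\infty$.

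Third, the left side $\Pr\{\limsup_i E_i\}$ does not depend on $N$. Letting $N\to\infty$ therefore gives $\Pr\{\limsup_i E_i\}\le 0$, so it equals $0$.

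There is no real obstacle here. The only point needing care is the set-theoretic identification of $\limsup_i E_i$ as the intersection of the tail unions, which justifies the containment in every tail. Everything else is monotonicity and $\sigma$-subadditivity of the probability measure. Alternatively, one could note that $\E\bigl[\sum_i \I\{E_i\}\bigr] = \sum_i\Pr\{E_i\}<\infty$, so $\sum_i\I\{E_i\}<\infty$ almost surely, which means only finitely many $E_i$ occur almost surely. I would present the first argument as the main proof.
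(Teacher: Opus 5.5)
Your proof is correct. It is the standard argument: bound $\Pr\{\limsup_i E_i\}$ by $\Pr\{\bigcup_{i\ge N}E_i\}$ for every $N$, apply countable subadditivity, and let the tail of the convergent series vanish. The paper gives no proof of its own and simply quotes this as the classical Borel--Cantelli lemma, so your argument (or your alternative via $\E[\sum_i \I\{E_i\}]<\infty$) supplies the textbook proof that the citation relies on.
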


\section{Results for kNN Classification}

\addcontentsline{toc}{subsection}{B.1\quad  
   Theorem \ref{thm:main_knn_rate}}
{\bf Theorem \ref{thm:main_knn_rate}}. 
{\it Suppose data distribution has a 
$(\al, \be, \ga)$ Boltzmann margin. 
Let $C_\up = 2 + d/\up$. 
If $k = O(n^{\frac{\be+2}{\be+C_\up}})$, 
then for $n \geq (C' \ln n)^{1+C_\up/\be}$: 
\begin{equation*}
||\hat{y} - \hat{y}_*||_e
\leq C_\al \exp\left(- C_\ga
 n^{\frac{\be}{\be + C_\up}} \right),     
\end{equation*}    
where $C_\al = 2 + \al$, $C_\ga = \min(\ga, 1/C')$ 
and $C'>0$ is a constant depending on 
$\la, \up$ and the structure of $\X$. 
}

\begin{proof}
By Lemma \ref{lem:exer2geer} and Lemma 
\ref{lem:cond4geer}, for any $t > 0$, 
\begin{equation}
\label{eq:prf_lem_exer2geer}
\begin{split}
||\hat{y} - \hat{y}_*||_e & \leq 
{\Pr}_{X, S}\{ \hat{y}(X) \neq \hat{y}_*(X) \}\\[.5em]  
& \leq {\Pr}_{X, S}\{|\hat{\eta}(X; S) - \eta(X)| 
> |1/2 - \eta(X)|\}\\[.5em] 
& \leq {\Pr}_{X, S}\{|\hat{\eta}(X; S) - \eta(X)| > t\}
+ {\Pr}_{X}\{|1/2 - \eta(X)| \leq t\}, 
\end{split}
\end{equation}
where the last inequality is by 
case-studying $|1/2 - \eta(X)|$ 
based on $t$.     

On the right side of (\ref{eq:prf_lem_exer2geer}), 
the second term can be bounded based 
on Boltzmann margin i.e, 
\begin{equation}
\label{eq:prf_01_blz}
{\Pr}_{X}\{|1/2 - \eta(X)| \leq t\} 
\leq \al \exp(-\ga t^{-\be}).
\end{equation}

The first term can be bounded using Corollary \ref{cor:lem_xue2}. This corollary sets $C_{\up} 
= 2 +d/\up$  and $C' = \frac{C_d}{\V C^{C_\up}}$, where 
$C_d>0$ is a constant depending on $d$ 
and $C$ is the constant in Lemma \ref{lem:xue2} 
that depends on $\la$ and $\up$. (This means 
$C'$ depends on $\la, \up, d, \V$. )  
The corollary says if $k = O(t^{\frac{d}{\up}} 
n C_{d}^{\frac{2 \up + 2d}{2\up + d}})$ 
 then 
\begin{equation}
\label{eq:prf_01}
{\Pr}_{X, S} \{ |\hat{\eta}(X; S) - \eta(X)| > t \}    
\leq 2 n \exp(- C' n t^{C_{\up}}).  
\end{equation}
Plugging this and (\ref{eq:prf_01_blz}) back to 
(\ref{eq:prf_lem_exer2geer}), we have 
\begin{equation}
\label{eq:thm_prf_03}
{\Pr}_{X, S} \{ \hat{y}(X) \neq \hat{y}_*(X) \} 
\leq 2n \exp(- C' n t^{C_\up}) 
+ \al \exp(-\ga t^{-\be}) := \Delta.        
\end{equation}
To merge the two terms in $\Delta$, 
we first assume $n \geq \frac{2}{C'} t^{-C_\up} \ln n$, 
which is realizable for large enough $n$ 
and a proper choice of $t$. Then, we have 
$n \leq \exp(\frac{1}{2} C' n t^{C_\up})$ and  
thus 
\begin{equation}
\Delta \leq 2 \exp(- \frac{1}{2} C' 
n t^{C_\up}) + \al \exp(-\ga t^{-\be}). 
\end{equation}
Set $n t^{C_\up} = t^{-\be}$. 
Then $t = n^{-\frac{1}{C_\up + \be}}$ and 
the above implies 
\begin{equation}
\label{eq:prf_0012}
\Delta \leq (2 + \al) 
\exp\left(- \min(\frac{1}{2}C', \ga) 
\cdot n^{\frac{\be}{\be + C_\up}} \right).      
\end{equation}
Also, based on the above choice of $t$, the 
conditions on $n$ and $k$ 
become $n \geq (\frac{2}{C'} 
\ln n)^{\frac{C_\up + \be}{\be}}$ and 
$k = O(n^{\frac{2 +\be}{C_\up + \be}})$. 
The theorem is proved by 
substituting $\frac{2}{C'}$ by $C'$, 
and noting that $C'$ depends on $\la, \up$ 
as well as $d,\V$ (structure of $\X$). 
\end{proof}

\addcontentsline{toc}{subsection}{B.2\quad  
   Remark \ref{rem:thm_knn_rate}}
{\bf Remark \ref{rem:thm_knn_rate}}. 
{\it Theorem \ref{thm:main_knn_rate} implies 
that, for $n \geq 1$: 

(i) If $k = O(n^{\frac{\be+2}{\be+C_\up}})$, 
then there exists a constant $\tilde{C}_\al > 0$ such 
that $||\hat{y} - \hat{y}_*||_e \leq \tilde{C}_{\al} 
\exp\left(-  C_\ga n^{\frac{\be}{\be + C_\up}} \right)$. 

(ii) Take $\be \to \infty$. Then 
$||\hat{y} - \hat{y}_*||_e 
\leq C_\al' \exp\left(- C_\ga n \right)$ 
for $k = O(n)$,  
where $C_\al' = \lim_{\be \to \infty} \tilde{C}_\al 
< \infty$.  
}

\begin{proof}
To prove (i), we start with arguments 
(\ref{eq:thm_prf_03}) and (\ref{eq:prf_0012}) 
in the proof of Theorem \ref{thm:main_knn_rate}, 
which imply  
\begin{equation}
\label{eq:rem_thm1_prof_o1}
{\Pr}_{X, S} \{ \hat{y}(X) \neq \hat{y}_*(X) \} 
\leq (2 + \al) \exp\left(- \min(\frac{1}{2}C', \ga) 
\cdot n^{\frac{\be}{\be + C_\up}} \right)
\end{equation}
when $n \geq (\frac{2}{C'} \ln n)^{
\frac{C_\up + \be}{\be}}$. This constraint can 
be satisfied for all $n \geq N$, where $N > 0$ is 
a constant depending on $\la, \up, d, \V, \be$. 
In other words, when $n \geq N$, we have 
(\ref{eq:rem_thm1_prof_o1}). 

Set 
\begin{equation}
\label{eq:prf_CN}
C_N := (2 + \al) \exp\left(- \min(\frac{1}{2}C', \ga) 
\cdot N^{\frac{\be}{\be + C_\up}} \right).     
\end{equation}
Then, when $n \leq N$, there is 
\begin{equation}
\label{eq:prf_0013}
\begin{split}
{\Pr}_{X, S} \{ \hat{y}(X) \neq \hat{y}_*(X) \}  
\leq 1  & = \frac{1}{C_N} (2 + \al) 
\exp\left(- \min(\frac{1}{2}C', \ga) 
\cdot N^{\frac{\be}{\be + C_\up}} \right) \\
& \leq \frac{1}{C_N} (2 + \al) 
\exp\left(- \min(\frac{1}{2}C', \ga) 
\cdot n^{\frac{\be}{\be + C_\up}} \right).       
\end{split}
\end{equation}
Combining (\ref{eq:rem_thm1_prof_o1}) 
and (\ref{eq:prf_0013}) with 
$\tilde{C}_\al := \max(1, C_N^{-1}) \cdot 
(2 + \al)$, we have 
\begin{equation}
\label{eq:prf_0014}
{\Pr}_{X, S} \{ \hat{y}(X) \neq \hat{y}_*(X) \}  
\leq \tilde{C}_{\al} 
\exp\left(- \min(\frac{1}{2}C', \ga) 
\cdot n^{\frac{\be}{\be + C_\up}} \right).       
\end{equation}
This proves claim (i) in the remark. 

To prove claim (ii), it is tempting 
to simply take the limit of $\be$ on both sides 
of (\ref{eq:prf_0014}). However, we need to 
first make sure $\lim_{\be \to \infty} \tilde{C}_\al 
< \infty$ which is not obvious since $ \tilde{C}_\al$ 
also depends on $\be$. 

Recall $\tilde{C}_\al = \max(1, C_N^{-1}) \cdot 
(2 + \al)$ where $C_N$ is set in (\ref{eq:prf_CN}), 
it is clear a sufficient condition for $\lim_{\be \to \infty} \tilde{C}_\al < \infty$ is $\lim_{\be \to \infty} 
N^{\frac{\be}{\be + C_\up}} < \infty$. 
This is true for the following reason. 

First, as $\be \to \infty$, $N$ remains bounded 
because now we need every $n \geq N$ to satisfy 
\begin{equation}
n \geq \lim_{\be \to \infty} (\frac{2}{C'} 
\ln n)^{\frac{C_\up + \be}{\be}} 
= \frac{2}{C'} \ln n,     
\end{equation}
which is easy to fulfill by a finite $N$. 
Let $N_*$ be such an integer. Then, 
$\lim_{\be \to \infty} N^{\frac{\be}{\be + C_\up}} 
= N_*$. 

Now we can safely take the limit of $\be$ on both sides 
of (\ref{eq:prf_0014}), which proves claim (ii). 
\end{proof}

\addcontentsline{toc}{subsection}{B.3\quad  
   Corollary \ref{cor:main_knn_rate_v2}}
{\bf Corollary \ref{cor:main_knn_rate_v2}}. 
{\it Suppose data distribution has 
$(\al, \be, \ga)$ Boltzmann margin 
and $k = O(n^{\frac{\be+2}{\be+C_\up}})$. 
Then for any $\delta > 0$, the following 
holds for $n \geq 1$: 
\begin{equation*}
er(\hat{y}) - er(\hat{y}_*) \leq 
\tilde{C}_\al \delta^{-1} \exp\left(- 
C_\ga n^{\frac{\be}{\be + C_\up}} \right),  
\end{equation*}
with probability at least $1 - \delta$ 
over the random choice of $S$.
}

\begin{proof}
Combining Lemma \ref{lem:exer2geer} 
and argument (\ref{eq:prf_0014}) 
for Remark \ref{rem:thm_knn_rate}, 
for any $t > 0$ and $n \geq 1$, 
\begin{equation}
{\Pr}_S \{ er(\hat{y}) - er(\hat{y}_*) > t\} \leq 
\frac{1}{t} \tilde{C}_\al \exp(- C_\ga 
n^{\frac{\be}{\be + C_\up}}),  
\end{equation}
Setting the right side to $\delta$ and 
solving for $t$ proves the corollary. 
\end{proof}

\addcontentsline{toc}{subsection}{B.4\quad  
   Corollary \ref{cor:main_knn_rate_v3}}
{\bf Corollary \ref{cor:main_knn_rate_v3}}. 
{\it Suppose data distribution has 
a $(\al', \be')$ Tsybakov margin  
and let $\al, \be, \ga>0$ be 
constants satisfying 
$\ga \leq \ln(\al/\al')$. 
If $k = O(n^{\frac{2 +\be}{C_\up + \be}})$, 
then 
\begin{equation*}
||\hat{y} - \hat{y}_*||_e
\leq \tilde{C}_{\al} \exp(- C_{\ga} 
n^{\frac{\be}{\be + C_\up}}),   
\end{equation*}
for $2 \leq n \leq \left( \ga^{-1} 
\ln(\al/\al') \right)^{1 + C_\up/\be}$.   }

\begin{proof}
By Lemma \ref{lem:relation_tsybakov}, the 
distribution satisfies the $(\al, \be,\ga)$ 
Boltzmann margin property when 
$t \in [(\ga/\ln (\al/\al'))^{1/\be}, 0.5]$. 

Revisit the proof of Theorem \ref{thm:main_knn_rate} 
based on this implied Boltzmann margin. 
It sets $t = n^{\frac{-1}{C_\up + \be}}$, 
which implies 
\begin{equation}
2 \leq n \leq 
\left( \frac{1}{\ga} \ln(\al/\al')
\right)^{1 + C_\up/\be}. 
\end{equation}
This proves the corollary. 
\end{proof}

\addcontentsline{toc}{subsection}{B.5\quad  
   Corollary \ref{cor:main_knn_rate_v4}}
{\bf Corollary \ref{cor:main_knn_rate_v4}}. 
{\it Let $k = O(n)$. If data distribution 
has a $t_*$ Massart margin, then 
\begin{equation}
\label{eq:massart_exprate}
||\hat{y} - \hat{y}_*||_e \leq 
C \exp(- C_* n), 
\end{equation}    
for all $n \geq C_*^{-1} \ln n$,    
where $C$ and $C_* \propto t_*^{C_\up}$ 
are positive constants. 
Moreover, for every $t_* > 0$, 
there exists a data distribution 
satisfying the $t_*$ Massart margin 
for which (\ref{eq:massart_exprate}) 
holds for all $n \geq 1$ (possibly 
with different constants).   
}

\begin{proof}

By Lemma \ref{lem:relation_massart}, the 
distribution also has $(\al, \be, \ga)$ 
Boltzmann margin as long as $\al \exp(-\ga t_*^{-\be}) = 1$. 
Revisit the proof of Theorem \ref{thm:main_knn_rate} 
based on this implied Boltzmann margin. 

Arguments (\ref{eq:prf_lem_exer2geer}) 
(\ref{eq:prf_01}) imply that, if
$k = O(t^{\frac{d}{\up}} n 
C_{d}^{\frac{2 \up + 2d}{2\up + d}})$,  
then for any $t > 0$: 
\begin{equation}
||\hat{y} - \hat{y}_*||_e \leq 
{\Pr}_{X, S}\{ \hat{y}(X) \neq \hat{y}_*(X) \} 
\leq 2n \exp(- C' n t^{C_\up}) 
+ {\Pr}_{X}\{|1/2 - \eta(X)| \leq t\}.  
\end{equation}
Set $t = t_*$. Then, the right probability 
vanishes due to Massart margin and the above 
becomes: if $k = O(t_*^{\frac{d}{\up}} 
n C_{d}^{\frac{2 \up + 2d}{2\up + d}})$, then  
\begin{equation}
||\hat{y} - \hat{y}_*||_e
\leq 2n \exp(- C' t_*^{C_\up} n) 
\leq 2 \exp(- \frac{1}{2}C' t_*^{C_\up} n),  
\end{equation}
where the second inequality holds 
when $n \leq \exp(\frac{1}{2} C' n t_*^{C_\up})$. 
This proves the first claim of the corollary. 

To prove the second claim, 
start with a data distribution that 
has $(1, \be, t_*^{\be})$ Boltzmann margin 
for any given $t_* > 0$. 

Set $\be \to \infty$. Then two things happen: 
Lemma \ref{lem:relation_massart} suggests this data distribution gains a $t_*$ Massart margin, and 
Remark \ref{rem:thm_knn_rate} says 
$||\hat{y} - \hat{y}_*||_e 
\leq C_\al' \exp\left(- C_\ga n \right)$ 
for $k = O(n)$. For the second event, 
we should remark that $\lim_{\be \to \infty} C_\ga < \infty$ because $C_\ga = \min(\frac{1}{2}C', \ga)$ 
and $C'$ does not depend on margin parameters. 
Combining them proves the second claim. 
\end{proof}

\section{Results for Ensemble kNN Classification}

\addcontentsline{toc}{subsection}{C.1\quad  
   Theorem \ref{thm:main_eknn_rate}}
{\bf Theorem \ref{thm:main_eknn_rate}}. 
{\it Suppose data distribution has a 
$(\al, \be, \ga)$ Boltzmann margin. 
If $k = O(\tilde{n}^{\frac{\be + 2}{\be 
+ C_\up}})$, then 
\begin{equation}
\label{eq:thm_eknn}
||\hat{y}_e - \hat{y}_*||_e
\leq C_{\al}
\exp(- C_\ga \tilde{n}^{\frac{\be}{\be + C_\up}}),     
\end{equation}  
for $\tilde{n} \geq 
\left( 2 \max \{ \ln \tilde{n} 
, 2 \ln m \}/C' \right)^{1+C_\up/\be}$. 

Moreover, if $m = O(\sqrt{\tilde{n}})$, 
then (\ref{eq:thm_eknn}) holds for 
all $\tilde{n} \geq 1$, with $C_\al$ 
replaced by a possibly different 
constant $\tilde{C}_{\al} > 0$.
}

\begin{proof}
Recall $\hat{y}_e$ is trained on $\tilde{S}$, 
which is a collection of $m$ samples $S_1, \ldots, S_m$ 
each containing $\tilde{n}$ i.i.d. sampled points. 

By Lemma \ref{lem:exer2geer} and Lemma 
\ref{lem:cond4geer}, for any $t > 0$, 
\begin{equation}
\label{eq:prf_lemeknn_exer2geer}
\begin{split}
||\hat{y}_e - \hat{y}_*||_e & \leq 
{\Pr}_{X, \tilde{S}} \{ \hat{y}_e(X) \neq \hat{y}_*(X) \}\\[.2em]  
& \leq {\Pr}_{X, \tilde{S}}\{|\hat{\eta}_e(X; 
\tilde{S}) - \eta(X)| > t\}
+ {\Pr}_{X}\{|1/2 - \eta(X)| \leq t\}. 
\end{split}
\end{equation}
The second term can be bounded using Boltzmann 
margin. For the first term, 
\begin{equation}
\label{eq:prf_lemeknn_exer2geer_02}
\begin{split}
{\Pr}_{X, \tilde{S}}\{|\hat{\eta}_e(X; 
\tilde{S}) - \eta(X)| > t\} 
& = {\Pr}_{X, \tilde{S}} \left\{\frac{1}{m} 
\left|\sum_{j=1}^m \hat{\eta}(X; S_j) 
- \eta(X)\right| > t\right\} \\ 
& \leq \sum_{j=1}^m {\Pr}_{X, S_j} 
\left\{ \left|\hat{\eta}(X; S_j) 
- \eta(X) \right| > t\right\}\\
& = m \cdot {\Pr}_{X, S_1} \left\{ 
\left|\hat{\eta}(X; S_1) - 
\eta(X) \right| > t\right\},  
\end{split}
\end{equation}
where the inequality follows a union bound 
and the last equality is based on the 
i.i.d. sampling of $S_1, \ldots, S_m$. 

The right side of (\ref{eq:prf_lemeknn_exer2geer_02})  
can be bounded using Corollary \ref{cor:lem_xue2}. 
Recall $C_{\up} = 2 +d/\up$  
and $C' = \frac{C_d}{\V C^{C_\up}}$. 
The corollary says 
if $k = O(t^{\frac{d }{\up}} \tilde{n} 
C_d^{\frac{2 \up + 2d}{2\up + d}})$ then
${\Pr}_{X, S_1} \left\{ 
\left|\hat{\eta}(X; S_1) - 
\eta(X) \right| > t\right\} \leq 
2 \tilde{n} \exp\left(- C' 
\tilde{n} t^{C_\up}\right)$. 
Plugging this back to (\ref{eq:prf_lemeknn_exer2geer_02}), 
we have 
\begin{equation}
\begin{split}
{\Pr}_{X, \tilde{S}}\{|\hat{\eta}_e(X; 
\tilde{S}) - \eta(X)| > t\} \leq 
2 m \tilde{n} \exp\left(- C' 
\tilde{n} t^{C_\up}\right)
\leq 2 m \exp\left(- \frac{1}{2} 
C' \tilde{n} t^{C_\up}\right)
\leq 2 \exp\left(- \frac{1}{4} 
C' \tilde{n} t^{C_\up}\right), 
\end{split}
\end{equation}
where the second inequality holds when 
$\tilde{n} \leq \exp(\frac{1}{2} C' 
\tilde{n} t^{C_\up})$ and the third 
holds when $m \leq \exp\left( 
\frac{1}{4} C' \tilde{n} t^{C_\up}\right)$. 

Plugging the above back to 
(\ref{eq:prf_lemeknn_exer2geer}), we have 
\begin{equation}
{\Pr}_{X, \tilde{S}}\{ \hat{y}_e(X) \neq \hat{y}_*(X)\} 
\leq 2 \exp\left(- \frac{1}{4} 
C' \tilde{n} t^{C_\up}\right)
+ \al \exp(-\ga t^{-\be}).  
\end{equation}
To merge terms, set $\tilde{n} 
t^{C_\up} = t^{-\be}$. Then 
$t = \tilde{n}^{-\frac{1}{\be + C_\up}}$ 
and the above implies 
\begin{equation}
\label{eq:thm:proof_boundfinal}
{\Pr}_{X, \tilde{S}}\{ \hat{y}_e(X) \neq \hat{y}_*(X)\} 
\leq (2+\al) \exp\left(- \min\left(\frac{1}{4}C', \ga 
\right) \cdot \tilde{n}^{\frac{\be}{\be + C_\up}} \right).  
\end{equation}
Based on this choice of $t$, 
the above constraints on $\tilde{n}$ and $n$ 
can be merged into 
\begin{equation}
\label{eq:prf_eknn_tnbound}
\tilde{n} \geq 
\left(\frac{2}{C'} \max \{ \ln \tilde{n} 
, 2 \ln m \} \right)^{1+C_\up/\be}.    
\end{equation}
This proves the first claim of the theorem. 

The second claim follows from the fact that, 
if $m = O(\sqrt{\tilde{n}})$, 
then (\ref{eq:prf_eknn_tnbound}) becomes 
$\tilde{n} \geq (C'' \ln \tilde{n})^{1 + C_\up \be}$ 
for some constant $C'' > 0$. It is clear this 
new constraint can be satisfied for large 
enough $\tilde{n}$. This means there exists an 
integer $\tilde{N} > 0$ such that $n \geq \tilde{N}$ implies 
$\tilde{n} \geq (C'' \ln \tilde{n})^{1 + C_\up \be}$ 
and thus (\ref{eq:thm:proof_boundfinal}). Set 
\begin{equation}
C_{\tilde{N}} = (2+\al) 
\exp\left(- \min\left(\frac{1}{4}C', \ga \right) 
\cdot \tilde{N}^{\frac{\be}{\be + C_\up}} \right).   
\end{equation}
Then, when $n \leq \tilde{N}$, we have 
\begin{equation}
\label{eq:thm:proof_boundfinal2}
\begin{split}
{\Pr}_{X, \tilde{S}}\{ \hat{y}_e(X) \neq \hat{y}_*(X)\} 
\leq 1 & = \frac{1}{C_{\tilde{N}}} 
(2+\al) 
\exp\left(- \min\left(\frac{1}{4}C', \ga \right) 
\cdot \tilde{N}^{\frac{\be}{\be + C_\up}} \right)\\
& \leq \frac{1}{C_{\tilde{N}}} (2+\al) 
\exp\left(- \min\left(\frac{1}{4}C', \ga \right) 
\cdot \tilde{n}^{\frac{\be}{\be + C_\up}} \right).
\end{split}
\end{equation}
Finally, combining (\ref{eq:thm:proof_boundfinal}) 
and (\ref{eq:thm:proof_boundfinal2}) with 
$\tilde{C}_\al := \max\{2+\al, \frac{1}{C_{\tilde{N}}} 
(2+\al) \}$ proves the second claim. 
\end{proof}

\begin{remark}
\label{rem:thm_eknn_tool}
Theorem \ref{thm:main_eknn_rate} implies that, 
if data distribution has $(\al, \be, \ga)$ 
Boltzmann margin and $k = O(\tilde{n}^{\frac{\be + 2}{\be 
+ C_\up}})$, then 
\begin{equation}
{\Pr}_{X, \tilde{S}}\{ \hat{y}_e(X) 
\neq \hat{y}_*(X)\} \leq \tilde{C}_\al \exp(- 
C_\ga \tilde{n}^{\frac{\be}{\be +C_\up}})    
\end{equation}
for either (i) 
$m = O(\sqrt{\tilde{n}})$ and $\tilde{n} \geq 1$, 
or (ii) $m = \tilde{O}(e^{\tilde{n}})$ 
and $\tilde{n} \geq 
\left( 2 \max \{ \ln \tilde{n} 
, 2 \ln m \}/C' \right)^{1+C_\up/\be}$. 
\end{remark}

\addcontentsline{toc}{subsection}{C.2\quad  
   Remark \ref{rem:eknnmain}}
{\bf Remark \ref{rem:eknnmain}}. 
{\it
Theorem \ref{thm:main_eknn_rate} 
also applies when 
(i) a training set $S$ is sampled 
i.i.d. from the population, (ii) 
subsets $S_1, \ldots, S_m$ 
are drawn i.i.d. from $S$, 
and (iii) in each subset, points 
are sampled without replacement. }

\begin{proof}
Consider the alternative setting. 
Let $S$ be a training set, whose elements are sampled i.i.d. from the population. Let $S_1, \ldots, S_m$ subsets of $S$ that are independently sampled, and points in each 
subset are sampled without replacement. 
Let $\tilde{S}$ be the collection of 
these subsets. 
Let $\hat{y}_b$ be the ensemble kNN classifier built on above $\tilde{S}$. Its expected excess error is 
\begin{equation}
|| \hat{y}_{b} - \hat{y}_* ||_{e} 
= \mathbb{E}_{\tilde{S}, S} [er( 
\hat{y}_{b})] - er( \hat{y}_*).    
\end{equation}
We will show this error has the same bound 
presented in Theorem \ref{thm:main_eknn_rate}. 

First, by the arguments in Lemma \ref{lem:exer2geer} proof, we have 
\begin{equation}
er(\hat{y}) - er(\hat{y}_*) 
\leq {\Pr}_{X} \{ \hat{y}(X) \neq 
\hat{y}_*(X) \}.
\end{equation}
Taking expectation on both sides of 
the above w.r.t. $S$ and $\tilde{S}$, 
we have 
\begin{equation}
\label{eq:rem_cmp_01}
|| \hat{y}_{b} - \hat{y}_* ||_{e} 
\leq \Pr_{X, \tilde{S}, S} \{ \hat{y}_b 
(X) \neq \hat{y}_* (X) \} = \mathbb{E}_S \Pr_{X,\tilde{S} \mid S} \{ \hat{y}_b(X) 
\neq {\hat{y}}_* (X) \}.
\end{equation}
Second, by the arguments in Lemma 
\ref{lem:cond4geer} proof, we have
\begin{equation}
\label{eq:rem_cmp_02}
\mathbb{E}_S \Pr_{X,\tilde{S} \mid S} 
\{ \hat{y}_b(X) \neq \hat{y}_*(X) \} \leq \mathbb{E}_S \Pr_{X,\tilde{S} \mid S} \{ |\hat{\eta}_b(X; \tilde{S}) - \eta(X)|>t 
\} + \Pr_X \{ |1/2-\eta(X)| \leq t \}.
\end{equation}

Next, by following argument (\ref{eq:prf_lemeknn_exer2geer_02}) 
in Theorem 5.1 proof, we have 
\begin{equation}
\begin{split} 
\E_S \Pr_{X,\tilde{S} \mid S} 
\{ |\hat{\eta}_b(X; \tilde{S}) - \eta(X)|>t\}
& = \E_S \Pr_{X,\tilde{S} \mid S} 
\left\{\frac{1}{m} 
\left|\sum_{j=1}^m \hat{\eta}(X; S_j) 
- \eta(X)\right| > t\right\} \\
&\leq \mathbb{E}_S \sum_{j=1}^m \Pr_{X,S_j 
\mid S} \{|\hat{\eta}(X; S_j) - \eta(X)| 
> t \} \\ 
& = \sum_{j=1}^m \mathbb{E}_S \Pr_{X,S_j 
\mid S} \{ |\hat{\eta}(X; S_j) - 
\eta(X)| > t \} \\
&= \sum_{j=1}^m \Pr_{X,S_j} \{ |\hat{\eta}(X; 
S_j) - \eta(X)| > t \} \\[.1em]
& = m \Pr_{X,S_1} \{ |\hat{\eta}(X; S_1) 
- \eta(X)| > t \}. 
\end{split}    
\end{equation} 
In above, $\Pr_{X, S_j \mid S}$ only 
treats $S_j$ as a random subset given $S$, 
while $\Pr_{X, S_j}$ treats $S_j$ as an 
i.i.d. sample of the population (because 
its elements are sampled from $S$ without replacement). 
The last equality holds because $S_j$'s 
are sampled i.i.d. from $S$ and thus now 
treated as i.i.d. samples of the population. 
Plugging the above back to (\ref{eq:rem_cmp_02}) 
then (\ref{eq:rem_cmp_01}), we have 
\begin{equation}
\label{eq:rem_cmp_01}
|| \hat{y}_{b} - \hat{y}_* ||_{e} 
\leq m \Pr_{X,S_1} \{ |\hat{\eta}(X; S_1) 
- \eta(X)| > t \} 
+ \Pr_X \{ |1/2-\eta(X)| \leq t \}.
\end{equation}
Comparing this bound with the 
bound (\ref{eq:prf_lemeknn_exer2geer} 
+ \ref{eq:prf_lemeknn_exer2geer_02}) 
presented in Theorem 
\ref{thm:main_eknn_rate} proof i.e., 
\begin{equation}
||\hat{y}_e - \hat{y}_*||_e 
\leq m \cdot \Pr_{X, S_1} \left\{ 
\left|\hat{\eta}(X; S_1) - 
\eta(X) \right| > t\right\} 
+ \Pr_{X}\{|1/2 - \eta(X)| \leq t\},  
\end{equation}
we see $\hat{y}_{b}$ has the 
same excess error bound as 
$\hat{y}_e$. It is also noted 
the rest analysis on $\Pr_{X, S_1} \left\{ 
\left|\hat{\eta}(X; S_1) - 
\eta(X) \right| > t\right\}$ 
is the same for both classifiers, 
since $S_1$ is treated as an i.i.d. 
sample from the population in both cases. 
\end{proof}

\addcontentsline{toc}{subsection}{C.3\quad  
   Corollary \ref{cor:main_eknn_rate_v3}}
{\bf Corollary \ref{cor:main_eknn_rate_v3}}. 
{\it
Suppose data distribution has $(\al, \be, \ga)$ 
Boltzmann margin. If  
$k = O(\tilde{n}^{\frac{\be + 2}{\be + C_\up}})$ 
and $m = O(\sqrt{\tilde{n}})$, then for 
any $\delta > 0$ and $\tilde{n} \geq 1$, 
there is 
\begin{equation*}
er(\hat{y}) - er(\hat{y}_*) \leq 
\tilde{C}_\al \delta^{-1} \exp(- 
C_\ga \tilde{n}^{\frac{\be}{\be + C_\up}}),  
\end{equation*}
with probability at least $1 - \delta$ 
over the random choice of $\tilde{S}$, 
where $\tilde{C}_\al$ is specified 
in Theorem \ref{thm:main_eknn_rate}. 
}

\begin{proof}
By Lemma \ref{lem:exer2geer} and 
Remark \ref{rem:thm_eknn_tool}, we 
have for any $t > 0$:  
\begin{equation}
{\Pr}_{\tilde{S}} 
\{ er(\hat{y}_e) - er(\hat{y}_*) > t\} 
\leq \frac{1}{t} {\Pr}_{X, \tilde{S}}\{ 
\hat{y}_e(X) \neq \hat{y}_*(X)\} 
\leq \frac{1}{t} \tilde{C}_\al \exp(- 
C_\ga \tilde{n}^{\frac{\be}{\be +C_\up}}). 
\end{equation}
Setting the right side to $\delta$ and 
solving for $t$ proves the corollary. 
\end{proof}

\addcontentsline{toc}{subsection}{C.4\quad  
   Theorem \ref{thm:main_eknn_consistency}}
{\bf Theorem \ref{thm:main_eknn_consistency}}. 
{\it Suppose data distribution has a 
$(\al, \be, \ga)$ Boltzmann margin. 
Then, $\hat{y}_e$ is strongly 
Bayes consistent 
if $k = O(\tilde{n}^{\frac{\be + 2}{\be + C_\up}})$ 
and $m = \tilde{O}(e^{\tilde{n}})$. 
}

\begin{proof}
By Lemma \ref{lem:exer2geer}, for any $\tilde{S}$, 
\begin{equation}
er(\hat{y}_e) - er(\hat{y}_*) 
\leq {\Pr}_{X}\{ \hat{y}_e(X) \neq \hat{y}_*(X) \} 
:= Q(\tilde{S}).
\end{equation}
We will prove strong consistency based 
on the following definition i.e., for any $t > 0$, 
\begin{equation}
{\Pr}_{\tilde{S}} 
\{{\lim}_{|\tilde{S}| \to \infty} 
Q(\tilde{S}) > t\} = 0.
\end{equation}
According to the Borel-Cantelli Lemma 
\ref{lem:borelcantelli}, it suffices to show  
\begin{equation}
\label{eq:thm_knn_consistency_prf_01}
\sum_{|\tilde{S}| = N_*}^\infty  
{\Pr}_{\tilde{S}}\{Q(\tilde{S}) > t \} 
< \infty,   
\end{equation}
for some integer $N_*$. 

Recall $|\tilde{S}| = m \tilde{n}$. 
Our observation is that, in order to 
verify (\ref{eq:thm_knn_consistency_prf_01}), 
it is sufficient to verify the following 
with $m = 1$:  
\begin{equation}
\label{eq:thm_knn_consistency_prf_01_ntonly}
\sum_{\tilde{n} = N_*}^\infty  
{\Pr}_{\tilde{S}}\{Q(\tilde{S}) > t \} 
< \infty,   
\end{equation}
The reason is, the sequence of 
$|\tilde{S}|$ generated with 
divergent $m$ and divergent $\tilde{n}$ 
is a subsequence of $|\tilde{S}|$ 
generated with $m = 1$ and divergent $\tilde{n}$. 
Therefore, if the latter sequence 
converges, the former sequence also 
converges. 

Fix $m = 1$. We will set $N_*$ in (\ref{eq:thm_knn_consistency_prf_01_ntonly}) 
from two perspectives. 

Let $t_s > 0$ be a variable depending 
on $\tilde{S}$, and inspect each ${\Pr}_{\tilde{S}} 
\{Q(\tilde{S}) > t_s \}$. 
Remark \ref{rem:thm_eknn_tool} implies 
that for $k = O(\tilde{n}^{\frac{\be + 2}{\be + C_\up}})$, 
$m = \tilde{O}(e^{\tilde{n}})$ (which includes 
$m = 1$) and large enough $\tilde{n}$ 
that satisfies $\tilde{n} \geq \left( 2 \ln \tilde{n} 
/C' \right)^{1+C_\up/\be}$ (which sets $N_*$):  
\begin{equation}
\label{eq:eknnstrongcon}
{\Pr}_{X, \tilde{S}}\{ \hat{y}_e(X) 
\neq \hat{y}_*(X)\} \leq \tilde{C}_\al \exp(- C_\ga 
\tilde{n}^{\frac{\be}{\be + C_\up}}). 
\end{equation}
Then, by Lemma \ref{lem:exer2geer}, we have 
\begin{equation}
{\Pr}_{\tilde{S}} 
\{Q(\tilde{S}) > t_s \} \leq \frac{1}{t_s} 
\E_{\tilde{S}} Q(\tilde{S}) 
= \frac{1}{t_s} {\Pr}_{X, \tilde{S}} 
\{ \hat{y}_e(X) \neq \hat{y}_*(X)\} 
\leq \frac{\tilde{C}_\al}{t_s} \exp(- C_\ga 
\tilde{n}^{\frac{\be}{\be + C_\up}}).
\end{equation}
Set $t_s = \exp(- \frac{C_\ga}{2} 
\tilde{n}^{\frac{\be}{\be + C_\up}})$. 
Then $t_s \to 0$ as $\tilde{n} \to \infty$. 
This means for any $t > 0$, when $
\tilde{n}$ is large enough (which  
sets $N_*$ from another perspective), 
there is $t_s < t$ and hence 
\begin{equation}
\label{eq:knnstrong_our}
{\Pr}_{\tilde{S}}\{Q({\tilde{S}}) > t \} \leq 
{\Pr}_{\tilde{S}}\{Q({\tilde{S}}) > t_s \} \leq 
\frac{\tilde{C}_\al}{t_s} \exp(- C_\ga 
\tilde{n}^{\frac{\be}{\be + C_\up}}) 
= \tilde{C}_\al \exp(- \frac{1}{2} C_\ga 
\tilde{n}^{\frac{\be}{\be + C_\up}}).
\end{equation}
Plugging the above back to 
(\ref{eq:thm_knn_consistency_prf_01_ntonly}), 
we have that for any $t > 0$, 
\begin{equation}
\sum_{\tilde{n} = N_*}^\infty  
{\Pr}_{\tilde{S}}\{Q({\tilde{S}}) > t \}\leq 
\sum_{\tilde{n} = N_*}^\infty  
{\Pr}_{\tilde{S}}\{Q({\tilde{S}}) > t_s \} 
\leq \tilde{C}_\al
\sum_{\tilde{n} = N_*}^\infty  
 \exp(- \frac{1}{2} C_\ga 
\tilde{n}^{\frac{\be}{\be + C_\up}}) < \infty.     
\end{equation}
This verifies (\ref{eq:thm_knn_consistency_prf_01_ntonly})
and further (\ref{eq:thm_knn_consistency_prf_01}) with 
any divergent $m$. Finally, note we still need 
$m = \tilde{O}(e^{\tilde{n}})$ in order to apply 
Remark \ref{rem:thm_eknn_tool} and obtain 
(\ref{eq:eknnstrongcon}). This proves the theorem.  
\end{proof}

\section{Numerical Results}

\subsection{Density Design}
\label{sec:app_numerical_density}

Recall $\X = [0, 1]$ 
and $\eta(X) = X$. 
Given constants $\al, \be, \ga > 0$, 
our goal is to design a density 
function $f_b$ on $[0,1]$ that satisfies 
$(\al, \be, \ga)$ Boltzmann margin. 
The basic idea is threefold: data 
decay near decision boundary, data 
are constant elsewhere (for better 
visualization and does not affect 
Boltzmann margin), and density 
is H\"older continuous. 

Pick a threshold $T \in (0, 0.5)$ 
and consider two cases. 

When $|X - 0.5| \leq T$, we ask data 
to decay as if under Boltzmann margin. 
This implies 
\begin{equation}
\label{eq:exp_sim_01}
\Pr \{|\eta(x) - 0.5| \leq t\} 
= \Pr \{|x - 0.5| \leq t\}
= 2 \int_{0.5}^{0.5+t} f_b(x) dx
\leq \al \exp(-\ga t^{-\be}). 
\end{equation}
For the last inequality, taking derivative 
of $t$ on both sides and setting $t' = 0.5 + t$ 
yields 
\begin{equation}
f_b(t') \leq \frac{1}{2} \al 
\be \ga e^{-\ga (t'-0.5)^{-\be}} 
(t'-0.5)^{- \be - 1}.    
\end{equation}
A mirrored result of (\ref{eq:exp_sim_01})
can be obtained by taking integral 
on $[0.5-t, 0.5]$ instead of 
$[0.5, 0.5+t]$, which yields 
\begin{equation}
f_b(t') \leq \frac{1}{2} \al 
\be \ga e^{-\ga (0.5- t')^{-\be}} 
(0.5-t')^{- \be - 1}.    
\end{equation}
Combining both results, we can set 
\begin{equation}
\label{eq:both_fb_01}
f_b(x) = \frac{1}{N_T} \cdot \frac{1}{2} \al \be 
\ga e^{-\ga |x-0.5|^{-\be}} 
|x-0.5|^{- \be - 1},    
\end{equation}
where $N_T$ is a normalizer (to be specified 
later). This completes the density design 
for $|X - 0.5| \leq T$. 

When $|x-0.5| \geq T$, we ask density 
to stay constant i.e., $f_b(x) = \frac{C_T}{N_T}$ 
for some constant $C_T$. 

It remains to specify $N_T$ and $C_T$. 
Since $f_b(x)$ is continuous 
at $|0.5-X| = T$, based on (\ref{eq:both_fb_01}) 
we have 
\begin{equation}
C_T = \frac{1}{2} \al \be 
\ga e^{-\ga T^{-\be}} T^{- \be - 1}.    
\end{equation}
Further, $\int_0^1 f_b(x) = 1$ implies 
that (through straight calculations) 
\begin{equation*}
\label{sim:NT}
N_T = \al \exp(-\ga T^{-\be}) 
+ 2 C_T (0.5 - T).
\end{equation*}
Putting all together, 
we complete the density 
design on $\X$, i.e., 
\begin{equation}
\label{eq:density_blz}
f_b(x) = \begin{cases}
\frac{\al \be \ga}{2N_T} \cdot 
\frac{e^{-\frac{\ga}{|x-0.5|^{\be}}}}{|x-0.5|^{\be + 1}}, 
& |x - 0.5| \leq T\\[.75em]  
\frac{\al \be \ga}{2N_T} \cdot 
\frac{e^{-\ga T^{-\be}}}{T^{\be + 1}}, 
& |x - 0.5| > T. 
\end{cases}    
\end{equation}

\subsection{Data Sampling}
\label{sec:app_numerical_sample}

Our goal is to sample 1000 points 
in $\X$ according to the Boltzmann-margin 
density \ref{eq:density_blz_main}, 
by applying the rejection sampling \cite{bishop2006pattern} technique. 

To be specific, recall $\X = [0, 1]$. 
Let $g(x) = 1$ be the 
density of a uniform distribution 
on $\X$, and $M > 0$ be a constant such that 
$f_b(x) \leq M g(x) = M$ for all $x \in \X$. 
We estimate $M$ from $f_b(x)$ 
based on 1000 points uniformly sampled 
in $\X$. Detail sampling and estimation 
process is described in Algorithm \ref{alg:data}. 

\begin{algorithm}[h!]
\caption{Data Generation}
\label{alg:data}
\setstretch{1}
\begin{algorithmic}[1]
\Input Data pool $\X = \emptyset$, 
pool size $N_p$, parameter $\be$.  
\State Sample a point 
$x \in [0, 1]$ uniformly at random.
\State Sample a point 
$u \in [0, M]$ uniformly at random.
\State If $u \leq f_\be(x)$, add $x$ 
to $\X$ (allow duplicate points). 
\State Repeat (1-3) until $|\X| \geq N_p$.
\For{$x \in \X$}
\State Sample a point $v \in [0,1]$ 
uniformly at random.
\State Label $x$ by 1 if $v \leq 
\eta(x)$; label it by 0 otherwise. 
\EndFor
\end{algorithmic}
\end{algorithm}

\end{document}